\documentclass[11pt]{article}

\usepackage[letterpaper,margin=1in]{geometry}
\usepackage{microtype}

\usepackage{amsmath}
\usepackage{amsfonts}
\usepackage{amsthm}

\usepackage{graphicx}
\usepackage{float}
\usepackage[dvipsnames]{xcolor}
\usepackage[normalem]{ulem}
\usepackage{booktabs}
\usepackage{wrapfig}
\usepackage{placeins}
\usepackage{multirow}

\usepackage{algorithm}
\usepackage{algpseudocode}
\algtext*{EndFor}

\usepackage[numbers,sort&compress]{natbib}

\usepackage[hidelinks]{hyperref}
\hypersetup{
    colorlinks,
    linkcolor={red!50!black},
    citecolor={blue!50!black},
    urlcolor={blue!80!black}
}

\usepackage{graphicx}
\usepackage{amsmath}
\usepackage{amsfonts}
\usepackage{amsthm}
\usepackage{algorithm}
\usepackage{algpseudocode}
\algtext*{EndFor}
\usepackage{float}
\usepackage[dvipsnames]{xcolor}
\usepackage[normalem]{ulem}
\usepackage{booktabs} 
\usepackage{wrapfig}
\usepackage{placeins}
\usepackage{multirow}

\usepackage[hidelinks]{hyperref}
\hypersetup{
    colorlinks,
    linkcolor={red!50!black},
    citecolor={blue!50!black},
    urlcolor={blue!80!black}
}

\newtheorem{corollary}{Corollary}

\newtheorem{theorem}{Theorem}
\newtheorem{assumption}{Assumption}

\title{When Metropolis and Hastings Meet Bradley and Terry:

Exact MCMC From Preference Voting}

\author{
Ariel Smogorghevski\textsuperscript{1}
\quad
Nir Rosenfeld\textsuperscript{2}
\quad
Yaniv Romano\textsuperscript{2,3}
\\[0.8em]
\small
\textsuperscript{1}Department of Data and Decision Sciences,
Technion -- Israel Institute of Technology\\
\small
\textsuperscript{2}Department of Computer Science,
Technion -- Israel Institute of Technology\\
\small
\textsuperscript{3}Department of Electrical and Computer Engineering,
Technion -- Israel Institute of Technology
}

\date{}

\begin{document}

\maketitle

\begin{abstract}

Sampling from distributions conditioned on desired semantic properties is an emerging challenge in modern generative modeling. Metropolis--Hastings (MH) provides a principled route to conditional sampling, but requires access
to exact pointwise target-density evaluations, which are not available in generative settings.
Meanwhile, \textit{pairwise comparisons} by humans or model ``judges''
are highly accessible and have proved valuable across diverse applications.
We introduce \texttt{Pref-MH}, a general exact MH sampler for judge-induced conditional distributions using only stochastic binary pairwise comparisons.
Our key observation is that the MH unnormalized density ratio
matches the preference odds of the Bradley--Terry (BT) choice model.
The central challenge is that while MH requires precise ratio computation,
BT judges provide only sampled binary feedback.
To this end,
we develop a valid accept/reject rule 
whose resulting Markov chain provably converges to the target distribution. 
We further show that, for a fixed proposal kernel and budget, \texttt{Pref-MH} is optimal in the Peskun--Tierney sense among this class of exact reversible acceptance rules. Experiments on text generation and molecular design with LLM judges, as well as image generation with VLM judges, demonstrate that \texttt{Pref-MH} provides a practical and flexible approach to conditional sampling when comparative feedback is relatively easy to obtain.
\looseness=-1

\end{abstract}

\begingroup
\renewcommand{\thefootnote}{}
\footnotetext{Code is available online at \url{https://github.com/Ariels34/Pref-MH}.}
\addtocounter{footnote}{-1}
\endgroup

\begin{figure}[h]
    \centering
    \includegraphics[width=1.0\textwidth]{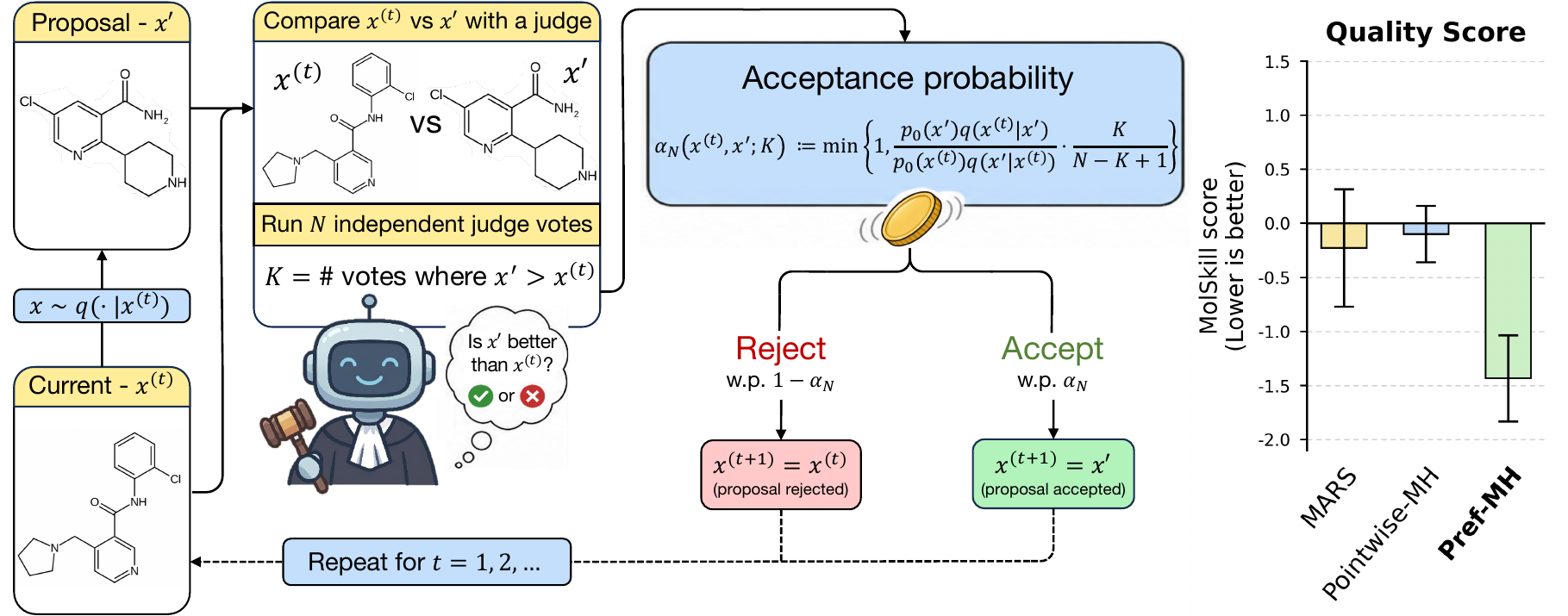}
    \caption{
    \textbf{Illustration of our proposed \texttt{Pref-MH} for molecular design for drug discovery.}
    At each step $t$: (i) a proposal sample $x'$ is generated given the current state $x^{(t)}$; (ii) $x'$ is then compared against $x^{(t)}$ by using $N$ stochastic judge votes; (iii) the proposal $x'$ is either accepted or rejected according to our vote-based acceptance rule $\alpha_N$. The right panel summarizes the results of a molecular-design experiment (Section~\ref{exp:molecular-design}), highlighting that \texttt{Pref-MH} outperforms the score-based baselines (lower is better).
    }
    \label{fig:method_illustration}
\end{figure}

\section{Introduction}

Drawing samples from conditional distributions is a fundamental challenge in computational statistics and machine learning. Markov Chain Monte Carlo (MCMC) methods provide a canonical framework for this task, with
Metropolis--Hastings (MH) serving as one of their most widely used instances
\citep{metropolis1953equation, hastings1970monte}. Such methods have found widespread use
in probabilistic machine learning and across diverse applications ranging from statistical physics to computational biology \citep{zhou2009bayesian, hansmann1999new, huelsenbeck2001bayesian, gill2013improving, tu2002image, turner2019metropolis}. Unlike methods that return a single ``best'' solution, MCMC enables repeated sampling of plausible instances from a target distribution. This makes it especially well suited to tasks requiring multiple diverse outputs, or settings in which a human selects a preferred option from a set of candidates. For example, in molecular design, MCMC can generate multiple molecules that satisfy a given criterion yet differ in their trade-offs across key properties such as efficacy, toxicity, and synthesizability \citep{fromer2023computer, a2012multi, xie2021mars, fu2021mimosa}.
\looseness=-1

Modern generative models, such as large language models (LLMs) and vision-language models (VLMs), 
offer the potential to unlock a wide range of new applications that require sampling from semantically meaningful conditional distributions over complex modalities \citep{hu2017toward,dathathri2020plug,yang2021fudge,krause2021gedi,liang2024controllable}.
However, while sampling from generative models in itself is straightforward,
it is unclear how to rigorously ensure that instances are indeed drawn from the correct and full target conditional distribution.
For example, if we wish to generate 
responses that are `helpful', `polite',
and in `Shakespearean style', then prompting for these properties does not guarantee that generated completed sequences truly satisfy them,
nor does it provide control over the nature of variation across samples.

Our work addresses this gap by introducing a novel Metropolis–Hastings (MH) algorithm tailored to generative models that enables principled conditional sampling. MH algorithms work by repeatedly proposing a candidate sample given the current sample,
which is either accepted or rejected according to a predefined acceptance rule designed to ensure convergence to the desired conditional distribution \citep{hastings1970monte,chib1995understanding,tierney1994markov}. The key methodological challenge is that implementing the acceptance rule requires access to the true unnormalized density ratios of
the target conditional distribution. For current generative models, this is typically infeasible \citep{dandapanthula2026reallytiltingmechanicsreward}: informally, in our example, this would require computing the probability of any given text under the distribution of all texts that are `helpful', `polite', and `Shakespearean'. Similarly, in molecular design, it would require assigning exact conditional likelihoods to molecules satisfying complex and often competing properties such as efficacy, toxicity, and synthesizability.

To overcome this challenge, we leverage the fact that pairwise comparisons have become ubiquitous in generative modeling,
whether in model alignment via human comparative feedback \citep{christiano2017deep,stiennon2020learning,ouyang2022training,bai2022training,rafailov2023direct}
or in the use of generative models as ``judges'' for benchmarks and competitions \citep{zheng2023judging}.
Such binary pairwise comparisons are readily obtainable, broadly applicable across tasks, and have often proven effective in practice.
However, using pairwise comparisons for conditional sampling raises two difficulties. First, the canonical MH algorithm innately depends on access
to \emph{pointwise} density ratios, making it unclear how to incorporate \emph{pairwise} feedback. Second, it requires computing density ratios precisely in order to guarantee convergence to the target distribution,
whereas judges provide only sample access to stochastic binary votes between alternatives.

This structural limitation raises a fundamental question: \emph{how to formulate an exact MCMC sampler when the target distribution is accessible only through stochastic, binary pairwise comparisons?} 
Exactness here means that the resulting chain's stationary distribution matches the desired target judge-induced conditional distribution.
In this paper, we show that such exact sampling is indeed possible by introducing \texttt{Pref-MH}: the first exact MH framework that uses only binary pairwise preference comparisons, with support for general state spaces and arbitrary proposal kernels. Figure~\ref{fig:method_illustration} visualizes the method in the context of de novo molecular design, where the goal is to generate molecules (SMILES-encoded) for downstream drug discovery campaigns; a task we revisit later in our experiments.

\subsection*{Preview of \texttt{Pref-MH} and key contributions}
We begin by observing a simple but powerful connection (Section~\ref{sec:BT-integration}): under the assumption that the conditional target is structured as a Gibbs distribution, the MH acceptance ratio closely mirrors the Bradley-Terry (BT) preference model from discrete choice \citep{bradley1952rank, luce1959individual}.
In MH, the acceptance rule depends on the ratio between the target densities of two candidate inputs. Under the BT model, the odds of comparison probabilities equal the ratio between the judge's implicit (and unobserved) preference `strength' for the two candidates.
We link the two by establishing a theoretical equivalence between both ratios.

Our key contribution is to then turn this connection between MH and BT into a practical, exact sampling algorithm. The main difficulty is that the sampler does not have access to the underlying comparison probability itself; rather, it can only query judges to obtain binary preference votes for a given input pair. 
A natural idea is therefore to estimate the required odds ratio 
by first estimating each comparison probability independently from multiple samples,
and then plugging the ratio of these estimates into the standard MH acceptance rule. We prove a strong impossibility result: \emph{any} procedure that uses a fixed number of binary comparisons---including such plug-in estimation---%
cannot implement the oracle acceptance rule (Thm.~\ref{thm:no-fixed-budget-mh}).
Intuitively, this holds since even small errors in estimating the acceptance ratio can accumulate to completely distort the chain's induced stationary distribution \citep{alquier2016noisy,mitrophanov2005sensitivity}. We circumvent this impossibility by designing a randomized accept/reject mechanism whose marginal acceptance probabilities satisfy the Hastings ratio induced by the pairwise judges; see the illustration in Figure~\ref{fig:method_illustration}. 
Although this does not implement the oracle rule,
it nonetheless yields a Markov chain whose stationary distribution
provably matches the target conditional distribution---%
despite using only a fixed number of sampled binary comparisons at each transition 
(Thm.~\ref{thm:n-vote-exact} and Cor.~\ref{cor:pref-mh-conv}).

Our resulting \texttt{Pref-MH} method is an exact reversible MCMC sampler that never evaluates a pointwise score function and never plugs in an estimated comparison probability or numerical approximation to the acceptance ratio.
Moreover, we establish a strong optimality result (Thm.~\ref{thm:peskun-optimality}): for any fixed proposal mechanism and fixed comparison budget, 
\texttt{Pref-MH} is optimal in the Peskun--Tierney
sense among the class of exact competitors using the same number of pairwise comparisons \citep{peskun1973optimum,tierney1998note}.
Because the rule is parameterized by a fixed number of judge queries, the user can choose the per-step comparison budget; increasing this budget can, in turn, support higher acceptance probabilities, allowing the sampler to explore the space of plausible candidates more effectively.

Finally, our accept/reject mechanism extends naturally to multi-judge settings (Sec.~\ref{sec:multi-judge}). This is because our method treats proposal generation and comparative evaluation as separate components: one proposes candidate samples, while the other judges their relative fit to the desired properties. This allows us to combine multiple independent comparative criteria---for example, 
one for helpfulness, one for politeness, and one for Shakespearean-ness--%
without reducing them to a single manually-weighted aggregate
score or a single judge query.
More broadly, this yields a modular way to build samplers from diverse comparative feedback sources, where each ``judge'' specializes in a single property,
enabling conditional sampling that is more direct and easier to adapt across tasks. 
\looseness=-1

We conclude with experiments demonstrating the utility and flexibility of the resulting sampler across several conditional generation tasks, including synthetic experiments, text generation, image generation, and molecular design for drug discovery. Figure~\ref{fig:method_illustration} highlights the results from the molecular-design experiment (Sec.~\ref{exp:molecular-design}), showing that our \texttt{Pref-MH} achieves better performance on an external evaluation metric, than MH baselines that do not use pairwise LLM judgments. The MolSkill metric presented reflects a human expert quality score \citep{choung2023extracting} and we do not use it to guide the sampling, making it an objective performance measure.

\paragraph{Related work.}
\citet{fotakis2022perfect} study perfect sampling
from pairwise comparisons over a fixed finite item set, using comparison samples
from a local sampling scheme. Human-in-the-loop MCMC methods, such as
\citet{sanborn2007markov,sanborn2010uncovering,harrison2020gibbs}, use human
choices to construct sampling procedures for subjective or semantic
representations, but are tied to specific behavioral protocols, choice models,
or update structures. A different line of work uses MH or related MCMC methods
in generative modeling and controllable generation, including energy-based
sampling for masked or controllable language models and quality-aware sampling
for machine translation
\citep{goyal2022exposing,mireshghallah2022mix,forristal2023block,du2024principled,gonzalez2025constrained,faria2024quest}.
Recent inference-time sampling methods have also used MCMC-inspired procedures
to improve LLM reasoning, but rely on base-model likelihoods rather than
pairwise comparison-only access \citep{karan2026reasoning}.
These methods strictly rely on an evaluable energy, score, reward, likelihood,
constraint indicator, or quality metric. Our \texttt{Pref-MH} differs in its technical scope: It offers an MH
accept/reject rule for general state spaces, user-chosen proposal kernels, and
builds on comparisons between the current and proposed states.
\texttt{Pref-MH} is not restricted to finite catalogs, symmetric proposals,
Barker-style updates, or pointwise reward evaluations. Further, \texttt{Pref-MH}
supports any fixed finite comparison budget per transition, trading judge cost for higher acceptance and exploration while preserving the target distribution
exactly. Additional related work is in Appx.~\ref{app:related-work}.
\looseness=-1

\section{Problem Setup and Background}
\label{sec:setup}

Let $\mathcal{X}$ denote the sample space
(e.g., plots for a film, math word problems, posters for an event),
and let $M$ be a property of interest,
such as adherence to an instruction, factual correctness, or a particular style. We assume access to a base distribution $p_0$ over $\mathcal{X}$---typically a generative model---%
but no direct mechanism for conditioning $p_0$ on $M$.
To define this conditioning, we consider settings in which the question of
whether an input $x$ admits the property $M$ can be evaluated by a judge, e.g., human or generative model.
Formally, we define the \emph{judge-induced target conditional distribution} to be:
\looseness=-1
\begin{equation}
\label{eq:target}
    \pi(x) := p(x \mid M)
    = \frac{p_0(x)\,p_J(M \mid x)}{\int_{\mathcal X} p_0(u)\,p_J(M \mid u)\,d\mu(u)}
    \propto p_0(x)\,p_J(M \mid x),
\end{equation}
Here $p_J(M \mid x)$ denotes the probability that $x$ satisfies $M$ according to a given judge $J$. In words, the target $\pi$ can be viewed as a reweighting of the base distribution $p_0$ toward samples which the judge $J$ views as more likely to exhibit the property $M$. Crucially, in this work, we do not assume that we have direct access to $p_J(M \,|\, x)$, but rather have only \emph{sample access to pairwise comparisons} between two candidates $x$ and $x'$. This restriction is in line with many settings in which head-to-head comparisons are available via a judge, whereas the true probability $p_J(M \mid x)$ is generally unknown.
\looseness=-1

In general, $M$ may reflect multiple desiderata that should hold simultaneously, such as helpfulness, politeness, and stylistic fidelity. To ease the exposition of our method, we begin with a single property $M$ and return to the multi-condition case
$M=(M_1,\ldots,M_m)$ in Section~\ref{sec:multi-judge}.

\paragraph{Metropolis--Hastings sampling.}
MH  is a general algorithm that produces samples from a target distribution $\pi$ by constructing a Markov chain whose stationary distribution matches the target.
Given an initial state $x_0$
and a chosen \emph{proposal kernel} $q(x' \,|\, x)$,
MH generates a sequence of samples in the following manner:
At each state $x$, a new candidate state $x'$ is proposed by sampling $x' \sim q(\cdot \mid x)$.
Then, $x'$ is accepted with probability:
\begin{equation}
\label{eq:mh-acceptance}
\alpha_{\mathrm{MH}}(x,x')
=
\min\left\{1,\,
\frac{\pi(x')\,q(x \mid x')}{\pi(x)\,q(x' \mid x)}
\right\}.
\end{equation}
If successful, the state transitions from $x$ to $x'$,
otherwise it remains at $x$.
The proposal kernel $q$ is the main design choice controlling how the sampler explores $\mathcal X$, from global proposals such as resampling from $p_0$ to local proposals such as modifying the current state. A key feature of our setup is that it leaves this proposal mechanism user-specified. Thus, \texttt{Pref-MH} could be combined with any valid proposal kernel, subject to the usual regularity conditions needed for convergence as stated in Corollary~\ref{cor:pref-mh-conv}. 
Repeating this accept/reject step iteratively produces the desired Markov chain.
\looseness=-1

In our setting, directly evaluating the target density $\pi$ is infeasible.
However, MH only requires ratios of target densities, so substituting Eq.~\eqref{eq:target} into the MH ratio gives:
\begin{equation}
\label{eq:mh-ratio-conditional}
\frac{\pi(x')\,q(x \mid x')}{\pi(x)\,q(x' \mid x)}
=
\frac{p_0(x')\,q(x \mid x')}{p_0(x)\,q(x' \mid x)}
\cdot
\frac{p_J(M \mid x')}{p_J(M \mid x)}.
\end{equation}
The first factor is tractable since $p_0$ is assumed given and $q$ is a design choice.
Since we do not assume direct access to $p_J(M \,|\, x)$, the remaining
challenge lies in computing the likelihood ratio
$p_J(M \,|\, x')/p_J(M \,|\, x)$.

\section{Proposed Method: Exact Sampling from Preference-Based Judges}
\label{sec:exact-sampling}

\subsection{Integrating Bradley--Terry within Metropolis--Hastings}
\label{sec:BT-integration}
To present the basic construction of our method, suppose for a moment that we have access to $p_J(M \,|\, x)$. A straightforward way to compute the likelihood ratio in Eq.~\eqref{eq:mh-ratio-conditional} is to compute $p_J(M\,|\,x)$ and $p_J(M\,|\,x')$ independently and then divide.
Unfortunately, this is not rigorously possible for generative models over arbitrary properties $M$.
Instead,
we make the (implicit) parametric assumption
that $p_J(M \,|\, x) \propto \exp({s(x)})$ for some \emph{unknown}, latent score function $s:\mathcal{X}\to\mathbb{R}$. Under this construction, the target in Eq.~\eqref{eq:target} admits the following Gibbs, score-tilted form
\begin{equation}
\label{eq:tilted_pi}
    \pi(x) \propto p_0(x)\,e^{s(x)}.
\end{equation}
Tilted distributions of this form play a central role in modern generative modeling and LLM alignment, where one seeks to bias sampling toward a desired reward while staying close to the base distribution \citep{christiano2017deep, ouyang2022training, rafailov2023direct, levine2018reinforcement}.

While the latent score is unknown, we can still leverage any pairwise comparison ``judge''
whose (stochastic) preferences over elements in $\mathcal{X}$ are induced by $s$.
For a judge $J$ of the property $M$,
the parametric assumption on $s$ entails
probabilistic pairwise comparisons that follow:
\begin{equation}
\label{eq:BT-model}
    p_J(x \prec x')
    =
    \sigma\!\big(s(x')-s(x)\big),
    \qquad
    \sigma(t)=\frac{1}{1+e^{-t}}.
\end{equation}
where $p_J(x \prec x')$ denotes the probability that judge $J$
prefers $x'$ to $x$.
This corresponds to the canonical BT model for discrete choice \citep{bradley1952rank, luce1959individual}.

Substituting Eq.~\eqref{eq:BT-model} into Eq.~\eqref{eq:mh-ratio-conditional},
we can now write the likelihood ratio 
using pairwise preferences:
\begin{equation}
\label{eq:bt-odds-warmup}
\frac{p_J(M\mid x')}{p_J(M\mid x)}
=
e^{s(x')-s(x)}
=
\frac{p_J(x \prec x')}{p_J(x \succ x')}
=
\frac{p_J(x \prec x')}{1-p_J(x \prec x')},
\end{equation}

known as the \emph{win-rate ratio}.
Plugging this into Eq.~\eqref{eq:mh-ratio-conditional} yields a 
preference-based acceptance rule:
\looseness=-1
\begin{equation}
\label{eq:mh-acceptance-bt}
\alpha_{\mathrm{BT\text{-}MH}}(x,x')
:=
\min
\bigg\{ 
1,\,
\underbrace{\frac{p_0(x')\,q(x \mid x')}{p_0(x)\,q(x' \mid x)}}_{r_0(x,x')}
\cdot
\underbrace{
\frac{p_J(x \prec x')}{1-p_J(x \prec x')}
}_{\text{win-rate ratio $w(x,x')$}}
\bigg\}. 
\end{equation}
The above derivation shows that, if the comparison probability were available,
the BT model would provide the likelihood ratio needed to run MH without
knowing the latent score itself.
Essentially, this enables us to replace the pointwise question ``does $x$ satisfy $M$?'' with the pairwise question ``which of $x$ and $x'$ is more likely $M$?''.
However, this creates a different challenge:
we need access to the win-rate ratio \(w(x,x')\). As a result,
Eq.~\eqref{eq:mh-acceptance-bt} does not yet provide an implementable sampler,
since the sampler observes only binary comparisons drawn by the judge.
The question, then, is how to construct an exact reversible kernel for \(\pi\)
given only access to such one-bit judge outcomes.

\subsection{Impossibility of directly implementing the BT-MH acceptance rule}

Since each judge's query returns only a binary vote, one may hope that a fixed
number of such outcomes is enough to implement the oracle acceptance
probability $\alpha_{\mathrm{BT\text{-}MH}}(x,x')$. For example, one natural attempt is to query the judge \(N\) times on the pair \((x,x')\), estimate \(p_J(x\prec x')\), and plug this estimate into Eq.~\eqref{eq:mh-acceptance-bt}. However, we show that this strategy, and in fact \emph{any} fixed-budget procedure over binary votes, cannot reproduce the oracle BT-MH acceptance probability for every value of the unknown comparison probability $p_J(x\prec x') = p \in(0,1)$.

\begin{theorem}[No fixed-budget implementation of the BT-MH acceptance rule]
\label{thm:no-fixed-budget-mh}
Fix a proposed move \(x\to x'\), and suppose that $r_0(x, x') \in (0, \infty)$.
For any \(N<\infty\), there is no
procedure using only $N$ samples
$J^{(1)},\ldots,J^{(N)} \sim \mathrm{Bernoulli}(p)$
whose marginal acceptance probability equals $\alpha_{\mathrm{BT\text{-}MH}}(x,x')$
simultaneously for all $p \in(0,1)$. 

The proof is given in Appx. \ref{app:no-fixed-budget-mh}
\end{theorem}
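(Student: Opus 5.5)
Any procedure using $N$ Bernoulli$(p)$ votes, plus possibly independent internal randomization (including any dependence on $x,x'$ through $r_0$), accepts with a probability that is a polynomial in $p$. The plan is to show that the oracle acceptance probability is not a polynomial on $(0,1)$. This is the standard unbiased-estimation argument for Bernoulli samples.

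\textbf{Step 1: polynomiality.} The procedure sees the vote vector $v\in\{0,1\}^N$ and internal randomness $U$ independent of the votes. It accepts with a conditional probability $a(v)\in[0,1]$ that does not depend on $p$. Hence the marginal acceptance probability is
\begin{equation*}
A(p)=\sum_{v\in\{0,1\}^N} a(v)\,p^{|v|}(1-p)^{N-|v|},
\end{equation*}
a polynomial in $p$ of degree at most $N$. If the procedure stops adaptively before using all $N$ votes, I pad it with unused votes, so the same form holds. Sequential use of the votes changes nothing, since the total number of queries is bounded by $N$.

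\textbf{Step 2: the target is not a polynomial.} Write $r=r_0(x,x')\in(0,\infty)$. The target is
\begin{equation*}
f(p)=\min\Big\{1,\; \frac{rp}{1-p}\Big\}.
\end{equation*}
This equals $rp/(1-p)$ on $(0,p^*]$, where $p^*=1/(1+r)\in(0,1)$, and equals $1$ on $[p^*,1)$. Suppose $A(p)=f(p)$ for all $p\in(0,1)$. On the nonempty open interval $(0,p^*)$ we would have $(1-p)A(p)=rp$. Since both sides are polynomials agreeing on an interval, this identity holds for all $p$. Evaluating at $p=1$ gives $0=r$, a contradiction since $r>0$.

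A second, independent contradiction is also available. $A$ would equal $1$ on $(p^*,1)$, hence $A\equiv 1$ identically, but $A(p)=rp/(1-p)\to 0$ as $p\to 0^+$. I will present the first argument and mention this one as a remark.

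\textbf{Main obstacle.} The only delicate point is Step 1: making precise that an arbitrary ``procedure'' reduces to a measurable acceptance function $a(v)$ independent of $p$. I will formalize a procedure as any rule whose acceptance decision is a (possibly randomized) function of the votes and auxiliary randomness independent of $p$. Averaging over that auxiliary randomness, for instance by Fubini, gives $a(v)$. Once this is set up, the remainder is elementary polynomial identity reasoning.
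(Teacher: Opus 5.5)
Your proposal is correct and matches the paper's proof: you reduce any procedure to a vote-dependent acceptance function, so that $A(p)$ is a polynomial of degree at most $N$, then observe that $(1-p)A(p)-rp$ vanishes on $(0,1/(1+r))$ and hence identically, and evaluating at $p=1$ forces $r=0$. Your alternative remark, that $A\equiv 1$ is forced by the interval $(p^*,1)$, is also valid, and your padding argument for adaptive stopping goes slightly beyond the paper, but neither is needed.
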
 
A few remarks are in place. First, the assumption \(r_0(x,x')\in(0,\infty)\) merely excludes degenerate proposed moves whose acceptance decision is independent of the judge outcomes. Second, the fixed-budget requirement is important in modern LLM- or human-as-a-judge applications in which judge queries are often costly, and a sampler should have a prescribed finite comparison cost per MH step. Third, the theorem does not rule out exact fixed-budget sampling altogether; rather, it rules out any attempt for reproducing the exact oracle BT-MH acceptance probability from finitely many binary comparisons. In turn, this highlights a fundamental challenge in obtaining an exact \texttt{Pref-MH} sampler. This motivates the next subsection, where we construct a valid accept/reject rule directly.
\looseness=-1

\subsection{A budget-optimal acceptance rule via $N$-vote construction}
\label{sec:acceptance-design}

We now give an explicit fixed-budget acceptance rule for \texttt{Pref-MH}---%
using only binary judge outcomes, while ensuring that the resulting Markov chain exactly matches the target distribution $\pi$.

Fix a current state $x$ and a proposal $x' \sim q(\cdot\mid x)$. Our construction queries the judge independently $N$ times on the pair $(x,x')$, resulting in $ 
J^{(1)},\ldots,J^{(N)} \overset{\mathrm{iid}}{\sim} \mathrm{Bernoulli}(p_J(x \prec x')).$ We denote by $K := \sum_{t=1}^N J^{(t)},$
the number of votes favoring the proposal $x'$. With this in place, we now define our \texttt{Pref-MH} acceptance rule, given by
\begin{equation}
\label{eq:n-vote-rule}
\alpha_N(x,x';K)
:=
\min\!\left\{1,\;
r_0(x,x')\,\frac{K}{N-K+1}
\right\}.
\end{equation}

A useful way to view the above $N$-vote rule is as follows. We start from the
tractable baseline factor $r_0(x,x')$ and then adjust it according to the
evidence from the judge. The vote count $K$ summarizes that evidence. If many
votes favor the proposal $x'$, it is desired to upweight $r_0(x,x')$ and increase the probability of accepting $x'$. By contrast, if few votes favor the proposal, it is desired to reduce the acceptance probability by downweighting $r_0(x,x')$. The factor above does exactly this. When $K=N$, the factor equals $N$, giving the largest upweighting;
when $K=0$, the factor equals $0$, ensuring the proposal will be rejected; and when the vote
is roughly split, the factor is close to $1$, so the decision is driven mainly by
$r_0(x,x')$.

This interpretation is also consistent with the BT odds. Indeed,
for large $N$,
\[
\frac{K}{N-K+1}
=
\frac{K/N}{1-K/N+1/N}
\xrightarrow[N\to\infty]{\mathrm{a.s.}}
\frac{p_J(x \prec x')}{1-p_J(x \prec x')},
\]
so the vote-based factor behaves like a finite-sample surrogate for the ideal
win-rate ratio. In turn, since the acceptance probability is a continuous function of this vote-based
factor, the realized acceptance rule converges almost surely to the ideal
MH update:
\[
\min\!\left\{1,\; r_0(x,x')\,\frac{K}{N-K+1}\right\}
\xrightarrow[N\to\infty]{\mathrm{a.s.}}
\min\!\left\{1,\; r_0(x,x')\,\frac{p_J(x \prec x')}{1-p_J(x \prec x')}\right\}.
\]
Hence, our proposed $N$-vote family may be viewed as a fixed-budget exact approximation toward the full
MH acceptance rule, with larger $N$ yielding behavior closer to the ideal update.

In view of the impossibility result, the key question is whether the $N$-vote rule in Eq.~\eqref{eq:n-vote-rule} still preserves the desired target distribution
for every fixed finite $N$. The next theorem shows that this is indeed the case. The key idea behind the proof is to show that the $N$-vote rule satisfies the detailed balance property, defined as follows:
\begin{equation}
\label{eq:detailed-balance}
\pi(x)\,
q(x'\mid x)\,
\bar\alpha_N(x,x')
=
\pi(x')\,
q(x\mid x')\,
\bar\alpha_N(x',x),
\ \ \text{for every} \ \
x\neq x'.
\end{equation}
Above, $\bar\alpha_N(x,x')$ denotes the acceptance probability, averaged over the random judge outcomes; it is formally defined in Eq.~\eqref{eq:alpha-bar} in the appendix.

\begin{theorem}[Exactness of the $N$-vote acceptance rule]
\label{thm:n-vote-exact}
Under the BT parametric assumption from Section~\ref{sec:BT-integration}, for every integer $N \ge 1$, the $N$-vote acceptance rule in
Eq.~\eqref{eq:n-vote-rule}, combined with a valid proposal kernel $q$, defines a
Markov transition kernel that satisfies detailed balance with respect to
$\pi$. Consequently, the resulting chain is reversible and has $\pi$ as a
stationary distribution.

The proof is given in Appx. \ref{app:n-vote-proof}
\end{theorem}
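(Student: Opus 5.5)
The plan is to verify the detailed-balance identity \eqref{eq:detailed-balance} directly, by writing $\bar\alpha_N$ as an explicit binomial sum and matching the two sides term by term through a reindexing of the vote count. Stationarity and reversibility then follow by the standard argument: integrate detailed balance over $x$, and handle the rejection (self-loop) mass separately since it is trivially symmetric.

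\textbf{Setup.} Fix $x\neq x'$ with $q(x'\mid x)>0$ and $q(x\mid x')>0$. If either is zero, both sides of \eqref{eq:detailed-balance} vanish, because $r_0(x,x')=0$ in one direction and the move is never proposed in the other. Write $p:=p_J(x\prec x')$ and $r:=r_0(x,x')\in(0,\infty)$.

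\begin{itemize}
\item Under the BT model \eqref{eq:BT-model}, $p_J(x'\prec x)=\sigma(s(x)-s(x'))=1-p$.
\item For the reverse move, $r_0(x',x)=1/r$.
\item Hence
\[
\bar\alpha_N(x,x')=\sum_{k=0}^N \binom{N}{k}p^k(1-p)^{N-k}\min\Big\{1,\tfrac{rk}{N-k+1}\Big\},
\]
\[
\bar\alpha_N(x',x)=\sum_{j=0}^N \binom{N}{j}(1-p)^j p^{N-j}\min\Big\{1,\tfrac{j}{r(N-j+1)}\Big\}.
\]
\item By \eqref{eq:tilted_pi} and \eqref{eq:bt-odds-warmup}, $\pi(x')q(x\mid x')/(\pi(x)q(x'\mid x))=r\,w$ with $w=p/(1-p)$.
\end{itemize}

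So detailed balance reduces to the identity $\bar\alpha_N(x,x')=r\,w\,\bar\alpha_N(x',x)$.

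\textbf{Key step (reindexing).} In the reverse sum, the $j=0$ term vanishes. For $j\ge1$, substitute $j=N-k+1$, so that $k$ ranges over $1,\dots,N$. Multiplying the resulting term by $r\,p/(1-p)$ gives
\[
\binom{N}{k-1}p^k(1-p)^{N-k}\min\Big\{r,\tfrac{N-k+1}{k}\Big\}.
\]
Now use $\binom{N}{k-1}=\binom{N}{k}\frac{k}{N-k+1}$ and pull the factor $\frac{k}{N-k+1}$ inside the minimum. This turns the term into
\[
\binom{N}{k}p^k(1-p)^{N-k}\min\Big\{\tfrac{rk}{N-k+1},1\Big\},
\]
which is exactly the $k$-th term of $\bar\alpha_N(x,x')$. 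The $k=0$ term of the forward sum is zero because $\alpha_N(x,x';0)=0$, so the two sums agree.

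This is where the specific choice $K/(N-K+1)$ in \eqref{eq:n-vote-rule} is needed. The denominator $N-K+1$ makes the reindexing $j\leftrightarrow N-k+1$ an involution compatible with the binomial ratio. I expect this matching to be the only delicate point: getting the offsets right and checking that the boundary terms ($k=0$ and $j=0$) drop out.

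\textbf{Conclusion.} With detailed balance established for all $x\neq x'$, the transition kernel
\[
P(x,dx')=q(x'\mid x)\bar\alpha_N(x,x')\,d\mu(x')+\big(1-\textstyle\int q\bar\alpha_N\big)\delta_x(dx')
\]
is $\pi$-reversible. Integrating over $x$ then shows that $\pi P=\pi$, i.e., $\pi$ is a stationary distribution of the chain.
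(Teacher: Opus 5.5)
Your proposal is correct and is essentially the paper's proof: the paper obtains this theorem as the $m=1$ case of its multi-judge exactness result, and that argument is exactly your computation. It drops the zero-vote terms, reindexes $k=N-j+1$, uses $\binom{N}{k-1}=\binom{N}{k}\frac{k}{N-k+1}$ with $\min\{1,A\}=A\min\{1,A^{-1}\}$, and concludes $\bar\alpha_N(x,x')=r\,\frac{p}{1-p}\,\bar\alpha_N(x',x)$. The only thing you leave implicit is the degenerate case $p_0(x)=0$ or $p_0(x')=0$, where $r\notin(0,\infty)$ even though both proposal probabilities are positive; it is harmless, since both sides of detailed balance then vanish (the paper treats $p_0(x')=0$ explicitly).
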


Having converted binary comparison outcomes into a valid, implementable accept/reject rule, we now summarize the resulting single-judge \texttt{Pref-MH} sampler in Algorithm~\ref{alg:single-judge}.

\begin{algorithm}[h]
\caption{Single-judge Pref-MH with exact $N$-vote acceptance}
\label{alg:single-judge}
\begin{algorithmic}[1]
\Require Initial state $x^{(0)}\in\mathcal X$, proposal kernel $q(\cdot\mid x)$, comparisons $N\ge 1$, iterations $T$
\For{$t=0,1,\ldots,T-1$}
    \State Propose $x' \sim q(\cdot\mid x^{(t)})$
    
    \State Query $J$ independently $N$ times on $(x^{(t)},x')$, and obtain $J^{(1)},\ldots,J^{(N)}.$
    \State Set $K= \sum_{i=1}^N J^{(i)}$, and compute    
    \[
    x^{(t+1)}
    =
    \begin{cases}
    x', & \text{with probability }
    \min\!\left\{1,\;
    \dfrac{p_0(x')\,q(x^{(t)}\mid x')}{p_0(x^{(t)})\,q(x'\mid x^{(t)})}
    \cdot
    \dfrac{K}{N-K+1}
    \right\}, \\[1.2ex]
    x^{(t)}, & \text{otherwise.}
    \end{cases}
    \]
\EndFor
\State \Return the trajectory $x^{(0)},x^{(1)},\ldots,x^{(T)}$.
\end{algorithmic}
\end{algorithm}

As with standard MH, stationarity alone does not by itself imply convergence
from an arbitrary initialization. To achieve this, we impose regularity assumptions on the proposal mechanism, stated formally in Appx.~\ref{app:pref-mh-convergence}. These are common assumptions in the MH literature and are not specific to preference-based sampling \citep{tierney1994markov, roberts2004general}. Informally, these ensure that the chain can explore the support of the target distribution and cannot get trapped in a deterministic cycle. For example, both $q(\cdot \,|\, x) = p_0(\cdot)$ and a $q$ which only resamples a suffix of $x$ using $p_0$ satisfy these assumptions \citep{liu1996metropolized,faria2024quest,karan2026reasoning}. 
\looseness=-1

\begin{corollary}[Convergence of the Pref-MH chain]
\label{cor:pref-mh-conv}
Consider the setting of Theorem~\ref{thm:n-vote-exact}, and fix any comparison
budget $N\ge 1$. Under the standard MH regularity assumptions on the proposal
mechanism, stated formally in Appx.~\ref{app:pref-mh-convergence}, the
Markov chain generated by Algorithm~\ref{alg:single-judge} converges in
distribution to the judge-induced target distribution $\pi$, for every
initialization in the support of $\pi$.

The proof is given in Appx. \ref{app:pref-mh-convergence}
\end{corollary}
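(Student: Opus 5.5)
Corollary~\ref{cor:pref-mh-conv} reduces to the classical ergodic theory of MH-type kernels once we know $\pi$ is stationary, which is Theorem~\ref{thm:n-vote-exact}. The Pref-MH kernel is
\[
P_N(x,A)=\int_A q(x'\mid x)\,\bar\alpha_N(x,x')\,d\mu(x') + \Big(1-\int q(x'\mid x)\,\bar\alpha_N(x,x')\,d\mu(x')\Big)\mathbf 1_A(x),
\]
with
\[
\bar\alpha_N(x,x')=\sum_{k=0}^N \binom{N}{k}p^k(1-p)^{N-k}\,\alpha_N(x,x';k),\qquad p=p_J(x\prec x').
\]
By Theorem~\ref{thm:n-vote-exact}, $P_N$ is $\pi$-reversible, so $\pi$ is invariant. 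By the standard theorem for Markov chains (Tierney 1994, Thm.~1; Roberts--Rosenthal 2004), a kernel with invariant $\pi$ that is $\pi$-irreducible and aperiodic converges in total variation to $\pi$ from $\pi$-almost every start. Harris recurrence, which follows for MH-type kernels (Tierney, Cor.~2), upgrades this to every start in the support. So it suffices to show $P_N$ inherits irreducibility and aperiodicity from $q$.

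The key step is a positivity comparison. Whenever $\pi(x),\pi(x')>0$ and $q(x'\mid x)>0$, we have $r_0(x,x')\in(0,\infty)$, and $p_J(x\prec x')\in(0,1)$ since $s$ is finite. The outcome $K=N$ then has probability $p^N>0$ and yields $\alpha_N=\min\{1,Nr_0\}>0$. Hence $\bar\alpha_N(x,x')>0$, so $P_N(x,\cdot)$ and $q(\cdot\mid x)$ restricted to the support of $\pi$ share null sets on the off-diagonal part. In fact $\bar\alpha_N\ge p^N\min\{1,r_0\}$. Under the assumed regularity condition that $q$ is $\pi$-irreducible (for every $A$ with $\pi(A)>0$ there is $n$ with $Q^n(x,A)>0$), induction on path length, using that each step's density under $P_N$ dominates a strictly positive multiple of $q$'s density on the support, gives $P_N^n(x,A)>0$. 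This is the same argument used for standard MH, with $\alpha_{\mathrm{MH}}>0$ replaced by $\bar\alpha_N>0$.

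For aperiodicity I would first use the rejection route: if the rejection probability is positive on a set of positive $\pi$-measure, the chain is aperiodic. This holds automatically here because $K=0$ occurs with probability $(1-p)^N>0$ and forces rejection, so $P_N(x,\{x\})>0$ for every $x$ in the support, which is strong aperiodicity. Alternatively, the appendix's assumption on $q$ can be used directly. The main obstacle is matching the exact form of the regularity assumptions in the appendix, whether irreducibility of $q$ or a minorization condition. If the assumption is a Doeblin or minorization condition, for example $q(\cdot\mid x)=p_0$, I would instead give a direct minorization $P_N(x,\cdot)\ge \epsilon\,\nu(\cdot)$ from the lower bound $\bar\alpha_N\ge p^N\min\{1,r_0\}$. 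This needs $p$ bounded away from $0$, which holds if $s$ is bounded; otherwise I would fall back on the irreducibility route, which needs only positivity, not uniformity.
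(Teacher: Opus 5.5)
Your architecture matches the paper's. You get stationarity from Theorem~\ref{thm:n-vote-exact}, and $\pi$-irreducibility is inherited from $q$ through a strictly positive marginal acceptance (you use the event $K=N$, the paper uses $K=1$; either works). Aperiodicity comes from the event $K=0$, which forces rejection and gives $P_N(x,\{x\})>0$, and the conclusion from Tierney's Theorem~1.

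The step that fails as written is your positivity claim. The conditions $\pi(x),\pi(x')>0$ and $q(x'\mid x)>0$ do \emph{not} give $r_0(x,x')\in(0,\infty)$. The ratio $r_0(x,x')=p_0(x')q(x\mid x')/\bigl(p_0(x)q(x'\mid x)\bigr)$ vanishes whenever the reverse proposal $q(x\mid x')$ does, and then $\alpha_N(x,x';k)=0$ for every $k$. So $\pi$-irreducibility of $q$, the hypothesis you guessed, is not enough. Take $\{0,1,2\}$ with $p_0$ uniform and the one-way cycle $q\bigl((x+1)\bmod 3\mid x\bigr)=1$: the proposal is irreducible, yet every proposed move has $r_0=0$, so the chain never leaves its initial state (ordinary MH fails identically).

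This must be supplied as a hypothesis, and the paper does so. Assumption~\ref{assump:proposal-regularity}(2) requires $r_0(x,y)\in(0,\infty)$ for $q(\cdot\mid x)$-a.e.\ $y$. Part~(3), $q(\cdot\mid x)\ll\pi$, keeps $q$-paths started in $S$ inside $S$. Your path-length induction silently needs this too, so that \emph{every} intermediate move has positive acceptance. With these two conditions your irreducibility argument goes through.

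For the upgrade from $\pi$-a.e.\ starting points to all of $S$, you cannot cite Tierney's Corollary~2 verbatim. It is a statement about Metropolis--Hastings kernels with acceptance $\min\{1,\cdot\}$, which $P_N$ is not. What carries over is the structure behind it, and you should state and check it:
\begin{itemize}
\item $P_N(x,\cdot)$ has a move part $\bar\alpha_N(x,y)\,q(dy\mid x)$ that is absolutely continuous with respect to $\pi$. This follows from part~(3), or in your density setting because $\bar\alpha_N(x,y)=0$ whenever $p_0(y)=0$.
\item The remaining holding mass at $x$ is strictly below $1$, by positivity of $\bar\alpha_N$.
\end{itemize}
The paper verifies exactly this by hand rather than through Harris recurrence. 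It takes the full-measure set $G$ on which Tierney's Theorem~1 gives convergence and uses $q(G\mid x)=1$ for $x\in S$. Then, from $x\notin G$, the chain holds for a geometric number of steps and jumps into $G$, and the strong Markov property finishes the argument. Once these facts are checked, the two routes coincide. The minorization fallback in your last paragraph is unnecessary.
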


\subsection{Optimality}
\label{sec:optimality}

We turn to show that our $N$-vote construction is optimal among exact
fixed-budget acceptance rules. The sense of optimality we use is the standard
Peskun--Tierney (PT) ordering for MCMC. In our setting, this ordering has a simple form. The proposal
kernel $q$ is fixed, and competing sampling methods differ only in how they decide
whether to accept or reject a proposal after observing exactly $N$ judge
queries. Thus, it is enough to compare their acceptance probabilities for each
proposed move. We say that rule $A$ dominates rule $B$ in a PT-sense if, for every proposed
move $x\to x'$,
\[
\mathbb P_A(\text{accept }x'\mid x,x'\text{ proposed})
\geq
\mathbb P_B(\text{accept }x'\mid x,x'\text{ proposed}).
\]
Above, the probability is taken with respect to the randomness of the acceptance rule, which is affected not only by the judge but also by the design of $A$ and $B$.
Intuitively, the idea is that among samplers that preserve the same target distribution, the one that accepts more proposals explores the space at least as efficiently.

The next theorem shows that our $N$-vote rule is optimal in this PT sense. That is, there is no other exact acceptance rule that can assign a larger acceptance probability to any proposed move using the same (i) proposal mechanism $q$, (ii) judge $J$, and (iii) number of
judge queries $N$.

\begin{theorem}[Peskun--Tierney optimality of the $N$-vote rule]
\label{thm:peskun-optimality}
Fix a proposal kernel $q$ and a comparison budget $N\geq 1$. Among all exact acceptance rules satisfying detailed balance with respect to $\pi$, that use the same proposal kernel and only the outcomes of exactly $N$ judge
queries per proposal, the $N$-vote rule Peskun--Tierney dominates every
competitor. Equivalently, for every proposed move $x\to x'$, we have 
$\mathbb P_{\text{$N$-vote}}(\textup{accept }x'\mid x,x'\textup{ proposed})
\geq
\mathbb P_{\text{competitor}}(\textup{accept }x'\mid x,x'\textup{ proposed}).
$

The proof is given in Appx. \ref{app:optimality-proof}
\end{theorem}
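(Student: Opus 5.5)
The approach is to reduce the comparison to a polynomial identity in the unknown comparison probability, and then read off the constraint that detailed balance imposes, coefficient by coefficient in the Bernstein basis.

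\textbf{Reduction to vote counts.} Fix a proposed move $x\to x'$ with $x\neq x'$, and write $p:=p_J(x\prec x')$. For the reverse move $x'\to x$ the comparison probability is $p_J(x'\prec x)=1-p$. By the BT identity in Eq.~\eqref{eq:bt-odds-warmup},
\[
\frac{\pi(x')q(x\mid x')}{\pi(x)q(x'\mid x)}=r_0\,\frac{p}{1-p},\qquad r_0:=r_0(x,x').
\]
A competitor rule may use the ordered vote sequence $(J^{(1)},\dots,J^{(N)})$, the known quantities $p_0$ and $q$, and auxiliary randomness. Averaging over the auxiliary randomness and over all sequences with the same number of ones gives numbers $a_k\in[0,1]$ for $k=0,\dots,N$. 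Its marginal acceptance probability is then
\[
\bar a(p)=\sum_{k=0}^N\binom{N}{k}p^k(1-p)^{N-k}a_k .
\]
Similarly, the reverse move has
\[
\bar b(1-p)=\sum_{m=0}^N\binom{N}{m}(1-p)^m p^{N-m}b_m .
\]
Here $a_k,b_m$ depend on the pair $(x,x')$ only through information the sampler has, and not on the latent score $s$.

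\textbf{Exactness must hold for every $p$.} The rule does not know $s$. I will argue that, for fixed $(x,x')$, varying the unknown score (e.g.\ shifting $s(x')$) attains every $p\in(0,1)$ while leaving $r_0$ and the functions $a_k,b_m$ unchanged. Hence exactness, i.e.\ detailed balance \eqref{eq:detailed-balance} for every admissible target, forces the polynomial identity
\[
(1-p)\,\bar a(p)=r_0\,p\,\bar b(1-p)\qquad\text{for all }p\in(0,1),
\]
and therefore as an identity of polynomials. This step, pinning down exactly what the competitor class is allowed to depend on, is where I expect the main care to be needed. The rest is algebra.

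\textbf{Coefficient matching.} Both sides have degree $N+1$. I expand them in the Bernstein basis $p^j(1-p)^{N+1-j}$, $j=0,\dots,N+1$, which is linearly independent.
\begin{itemize}
\item The left side has coefficient $\binom{N}{j}a_j$ for $j\le N$ and coefficient $0$ at $j=N+1$.
\item On the right side, the term $m$ contributes to $j=N-m+1$, giving coefficient $r_0\binom{N}{N+1-j}b_{N+1-j}$ for $j\ge1$ and coefficient $0$ at $j=0$.
\end{itemize}
Matching coefficients yields $a_0=0$, $b_0=0$, and, for $1\le k\le N$,
\[
a_k=r_0\,\frac{\binom{N}{k-1}}{\binom{N}{k}}\,b_{N+1-k}=r_0\,\frac{k}{N-k+1}\,b_{N+1-k}.
\]

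\textbf{Conclusion.} Since $b_{N+1-k}\le1$ and $a_k\le1$, every exact competitor satisfies
\[
a_k\le\min\Bigl\{1,\;r_0\,\tfrac{k}{N-k+1}\Bigr\}=\alpha_N(x,x';k)
\]
for each $k$, and also $a_0=0=\alpha_N(x,x';0)$. Averaging these pointwise inequalities against the $\mathrm{Binomial}(N,p)$ law of $K$ gives
\[
\mathbb P_{\text{competitor}}(\text{accept }x')\le\mathbb P_{N\text{-vote}}(\text{accept }x')
\]
for every proposed move. Together with Theorem~\ref{thm:n-vote-exact}, which shows the $N$-vote rule is itself in the exact class, this establishes Peskun--Tierney dominance.
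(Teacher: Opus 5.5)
Your proposal is correct and follows essentially the same route as the paper's proof, which is given as the $m=1$ case of the multi-judge theorem. Both arguments impose detailed balance as a polynomial identity in the unknown comparison probability, match coefficients to get $a_0=0$ and $a_k=r_0\frac{k}{N-k+1}b_{N+1-k}$, bound $b_{N+1-k}\le 1$, and average against the $\mathrm{Binomial}(N,p)$ law of $K$. Your Bernstein-basis expansion is equivalent to the paper's substitution $t=p/(1-p)$ after dividing by $(1-p)^{N+1}$. Your explicit justifications (shifting $s(x')$ to attain every $p\in(0,1)$, and averaging over vote orderings with the same count) spell out steps the paper only asserts.
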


\subsection{Composing multiple judges}
\label{sec:multi-judge}

We now extend our \texttt{Pref-MH} to settings where one wishes to favor several attributes simultaneously---for example, responses that are 
helpful, polite, and stylistically appropriate---without collapsing
them into a single hand-engineered score or judge.

Let $M_1,\ldots,M_m$ denote the desired conditions, and $J_1,\ldots,J_m$ the associated judges. We consider the composite target 
\begin{equation}
\label{eq:multi-target-main}
\pi_m(x)
:=
p(x\mid M_1,\ldots,M_m)
\propto
p_0(x)\prod_{i=1}^m p_{J_i}(M_i\mid x) \propto p_0(x) \exp \left({\sum_{i=1}^m s_i(x)}\right).
\end{equation}
The first decomposition holds due to Bayes' rule and under conditional independence given $x$, as shown in Appx. \ref{app:bayes_mh_derivation}. This condition is satisfied in our setting because each event $M_i$ is
determined by an independent judge $J_i$, conditional on $x$. The second transition holds under the parametric assumption
that $p_{J_i}(M_i \,|\, x) \propto \exp\left(s_i(x)\right)$, $i=1, \ldots ,m$, similarly to Eq.~\eqref{eq:tilted_pi}. As in the single-judge setting, we assume that judge $J_i$ follows a BT model with latent score $s_i$, satisfying $p_{J_i}(x\prec x')
=
\sigma\!\bigl(s_i(x')-s_i(x)\bigr),
$
where $p_{J_i}(x\prec x')$ denotes the probability that judge $J_i$ prefers
$x'$ to $x$. 

Accordingly, for a proposal kernel $q(x'\mid x)$, the MH
ratio factorizes as
\begin{equation}
\label{eq:multi-ratio-main}
\frac{\pi_m(x')\,q(x\mid x')}{\pi_m(x)\,q(x'\mid x)}
=
r_0(x,x')
\prod_{i=1}^m
\frac{p_{J_i}(x\prec x')}{1-p_{J_i}(x\prec x')}.
\end{equation}
Thus, the single-judge win-rate ratio is replaced by a product of the multiple judges' win-rate ratios. Since these comparison probabilities are not observed directly, for each
judge $J_i$, we query the pair $(x,x')$ independently $N$ times and record in $K_i$ the number of votes favoring $x'$. Finally, our multi-judge Pref-MH $N$-vote acceptance rule is given by
\begin{equation}
\label{eq:multi-n-vote-main}
\alpha_N^{(m)}(x,x';K_1,\ldots,K_m)
:=
\min\!\left\{1,\;
r_0(x,x')\prod_{i=1}^m \frac{K_i}{N-K_i+1}
\right\}.
\end{equation}
Indeed, when $m=1$, the rule in Eq.~\eqref{eq:multi-n-vote-main} reduces to the single-judge
construction. This highlights that the multiplication keeps the construction modular: adding a condition amounts to adding a
judge and multiplying by its corresponding $N$-vote factor.

Algorithm~\ref{alg:multi-judge} (Appx.~\ref{app:multi-judge-alg}) summarizes our resulting multi-judge \texttt{Pref-MH} sampler. The formal statements and proofs of the exact MCMC property and PT optimality are given in Appx.~\ref{app:multi-judge}.

\section{Experiments}

We evaluate \texttt{Pref-MH} in four different settings. The first is a controlled experiment that demonstrates the validity of our construction. The second focuses on multi-condition sampling for open-ended text generation, showing the advantage of using separate judges when conditions are complex. The third addresses image generation under multiple conditions, illustrating the ability of Pref-MH to sample from the target even when the state space is over a complex continuous modality. Finally, we study de novo molecular design, a real-world scientific design task where generating diverse candidates with desirable chemical properties is critical for early-stage drug discovery. An additional machine-translation experiment is presented in Appendix~\ref{app:exp-mt}.

\subsection{Synthetic validation}
\label{exp:synthetic}

To illustrate how \texttt{Pref-MH} converges to the true target distribution
despite inexact pairwise feedback,
we consider a synthetic setting in which the target distribution is known exactly.
We set $p_0(x)$ to be an arbitrary, non-uniform categorical distribution over
$k\in\{0,\ldots,240\}$.
We then set $p_J(M\,|\,x)$ indirectly
by defining the judge's preferences over elements in this set
through a complex (latent) score function $s(k)$,
defined as a smoothed sum of distances between $k$ and three anchor states $\{60, 130, 200 \}$, and an additional linear component.
Together, these determine $\pi(k) \propto p_0(k)\exp(s(k))$.
Binary judge outcomes are sampled from Eq.~\eqref{eq:BT-model} using $s$.
See Appx.~\ref{app:exp-synth} for additional details.

\begin{wrapfigure}{r}{0.41\linewidth}
    \vspace{-1.2em}
    \centering
    \includegraphics[width=0.90\linewidth]{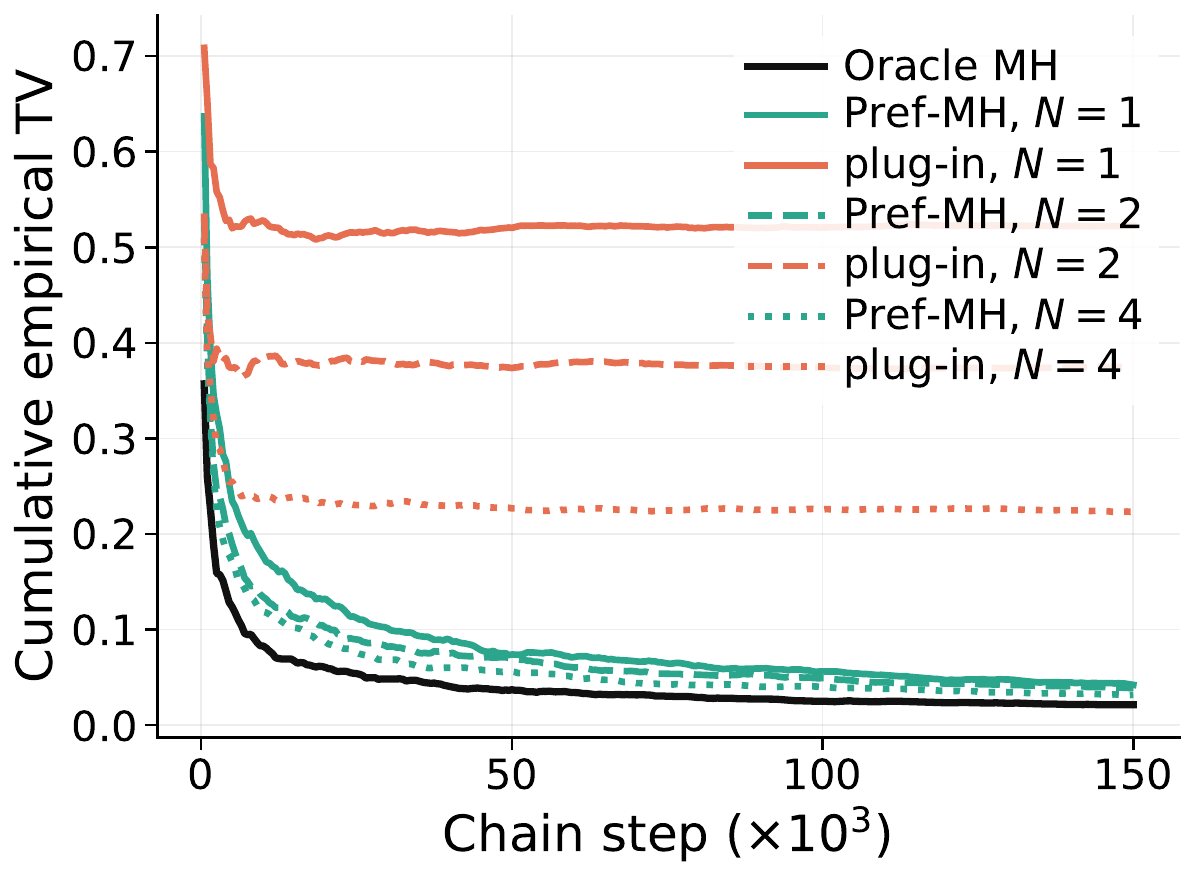}
    \caption{
    \!\!\textbf{Synthetic validation of MH methods.}
    Cumulative empirical total variation distance (TV) between the sampled chain and the true target.
    The distribution is visualized in Fig.~\ref{fig:synthetic-distribution} in the Appx.
    }
    \label{fig:synthetic-convergence}
\end{wrapfigure}

We compare \texttt{Pref-MH} to an invalid \texttt{Plug-in} MH sampler that estimates the
win-rate odds from the observed judge votes and plugs this empirical estimate
into Eq.~\eqref{eq:mh-acceptance-bt}.
By Theorem~\ref{thm:no-fixed-budget-mh}, this plug-in baseline is invalid and is not guaranteed to converge to the target \(\pi\).
We consider $N \in \{1,2,4\}$ for both methods.
As a benchmark, we also compare to an `oracle' MH that has (pointwise)
access to $s$---which is ideal, but infeasible. 
For all methods, we set \(q\) to be a symmetric mixture of a global uniform
proposal over the finite state space and a local nearest-neighbor move. Figure~\ref{fig:synthetic-convergence} shows a clear separation between \texttt{Pref-MH} and the invalid plug-in approach. \texttt{Pref-MH} continues to approach the target distribution as the chain progresses, even with a single judge query per proposal. In contrast, the plug-in chain plateaus at a large error even when using \(N=4\) judge queries.

\subsection{Multi-condition text generation}
\label{exp:multi-cond}

We next consider open-ended text generation under multiple semantic properties.
Llama-3.1-8B-Instruct \citep{grattafiori2024llama} serves as the base model and is prompted to generate the
opening of a story from a short scenario description. The dataset includes $50$ different descriptions, such as \textit{``a weary traveler arriving at a deserted inn just after midnight.''} We
condition on three stylistic properties: $M_1 =$ `elevated language,'
$M_2 =$ `a gothic or dramatic atmosphere,' and $M_3 =$ `frequent dialogue.' The same LLM model is also used as a judge.

We evaluate two variants of our method. \texttt{Pref-MH (sep.)} uses a separate
pairwise judge for each property \(M_i\), while \texttt{Pref-MH (joint)} uses a
single pairwise judge prompted to assess all properties jointly. We compare these
against three baselines. \texttt{Base LLM} directly samples from the base model without prompting for the desired attributes, testing whether MH can steer $p_0$ toward the desired conditional distribution. 
\texttt{Pointwise-MH (sep.)} is an MH sampler that prompts the Llama model to assign pointwise scores, evaluating each property separately, and \texttt{Pointwise-MH (joint)} uses a single
pointwise score intended to assess the three properties jointly.
The pointwise variants provide natural score-based MH baselines, but their scalar scores should
not be interpreted as equivalent to the latent scores underlying the pairwise
comparison model. Thus, even when using the same LLM, pointwise scoring and
pairwise comparison may define different evaluation signals. The proposal kernel
\(q(\cdot \mid x)\) is a suffix-resampling kernel that uses the same Llama model.
See Appx.~\ref{app:exp-stor} for details.

\begin{wrapfigure}{r}{0.47\linewidth}
    \centering
    \vspace{-1em}
    \includegraphics[width=\linewidth]{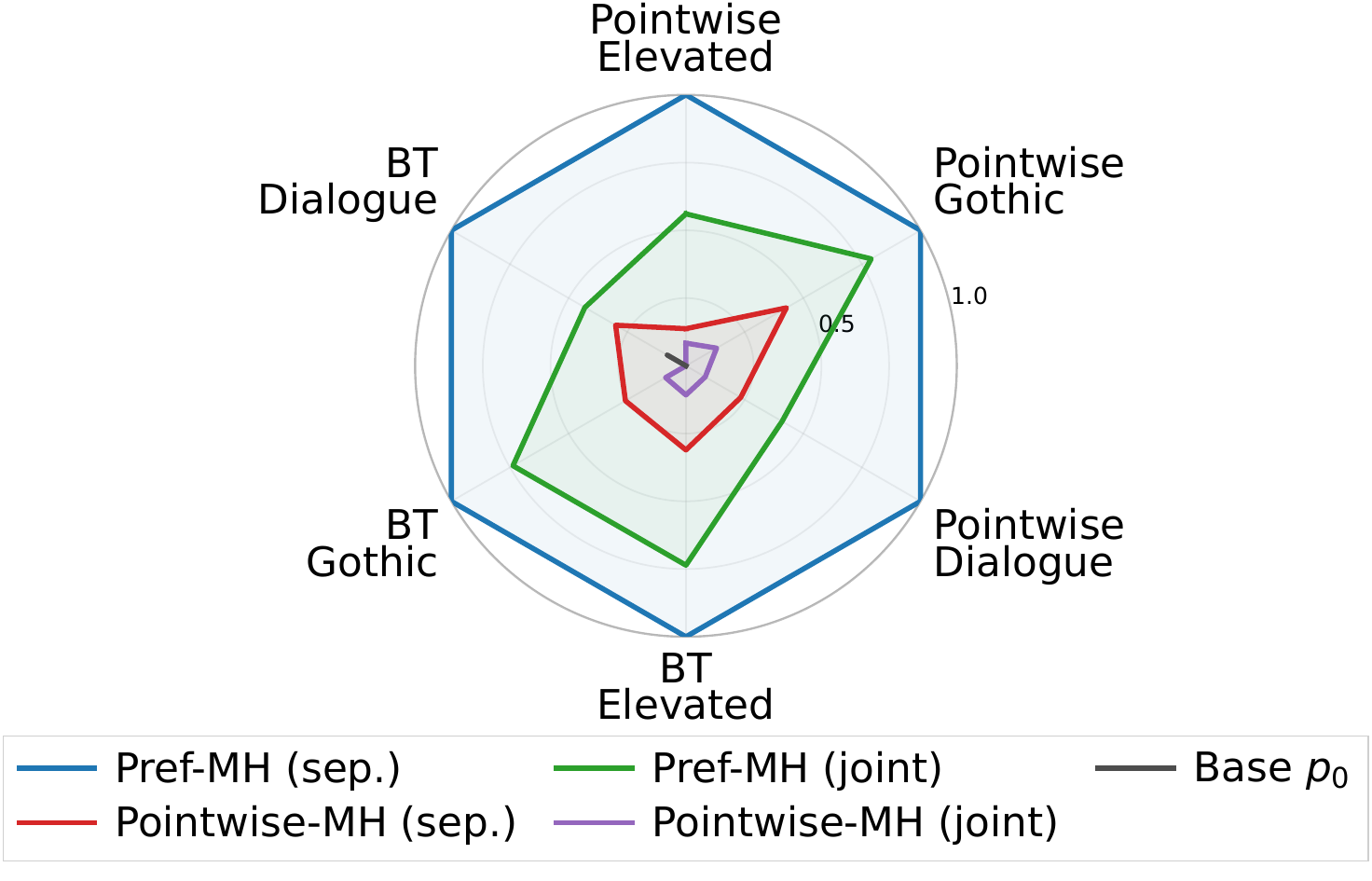}
    \caption{
    \textbf{Passage generation under multiple stylistic properties.}
    Pointwise scores and BT coefficients are normalized across methods for each semantic property; higher is better.
    }
    \label{fig:story_radar}
    \vspace{-1em}
\end{wrapfigure}

We compare methods by evaluating their ability to achieve the three properties.
Since there are no subjective measures for this, 
we use each method's evaluation signal:
(i) pointwise scores for each semantic property, and
(ii) pairwise scores obtained by fitting a BT model to each judge's
binary vote.
Figure~\ref{fig:story_radar} presents the results, where we normalized each evaluation criterion across methods. As shown, \texttt{Pref-MH} with separate judges performs best across all properties under the two metrics considered. The gap between separate and joint \texttt{Pref-MH} suggests that decomposing the conditioning criteria into multiple judgments is advantageous. The inferior performance of the pointwise-MH variants suggests that pairwise comparisons are more discriminative.

\FloatBarrier

\subsection{Image generation in continuous space}
\label{exp:image-generation}

The following stylized experiment demonstrates the applicability of \texttt{Pref-MH} on a completely different modality: conditional image generation. Here,  the state is a continuous latent noise vector. We use a diffusion model (SDXL-Turbo) as the base image generator \citep{sauer2024adversarial, podell2024sdxl}, which maps an input Gaussian noise vector to an image. Since the Gaussian density of each latent state can be evaluated, we run the chain in the noise space and seek noise vectors whose decoded images satisfy the desired properties. We use Qwen3-VL-8B-Instruct \citep{bai2025qwen3} as the VLM judge. See Appx.~\ref{app:exp-img} for details.

We consider multiple visual properties; the image should contain $M_1 =$ `an orange salamander,' $M_2 =$ `a turquoise butterfly,' and $M_3 =$ `a red mushroom.' We compare the chain sampled by \texttt{Pref-MH} to \texttt{Pointwise-MH} and \texttt{base}. The \texttt{Base} SDXL-Turbo model is prompted to generate images with the desired properties. The two MH methods are implemented with the same \texttt{base} method and prompt, but also using a separate VLM for each of the three conditions. \texttt{Pref-MH} uses $N=9$ pairwise votes per judge, while \texttt{Pointwise-MH} uses per-property scores. Both methods use the same proposal $q$, which randomly makes a local move, a wider move, or generates a fresh noise vector.

We evaluate each method's performance by reporting the fraction of images generated in the chain that contain all three desired objects. Object detection is performed using Gemma-3-12B-IT \citep{gemmateam2025gemma3}. Our \texttt{Pref-MH} achieves a success rate of \textbf{63.6\%}, substantially outperforming \texttt{Pointwise-MH} and the base model, which achieve success rates of \textbf{7.4\%} and \textbf{4.5\%}, respectively. At the same time, our approach produces images of comparable visual quality, as measured by the Aesthetic Predictor V2.5 score \citep{aestheticpredictorv25} (higher is better). Specifically, the average aesthetic scores are $6.395$, $6.289$, and $6.287$ for \texttt{Pref-MH}, \texttt{Pointwise-MH}, and the base model, respectively. These quantitative evaluations are consistent with the qualitative results. Figure~\ref{fig:image_generation}
shows representative samples, and full evaluation details are provided in
Appx.~\ref{app:exp-img}.

\begin{figure}[t]
    \centering
    \vspace{-0.8em}
    \includegraphics[width=0.8\linewidth]{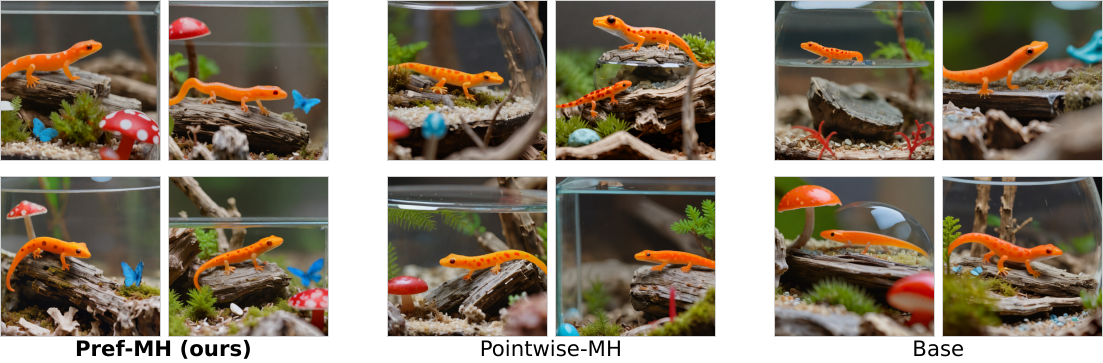}
    \vspace{-0.8em}
    \caption{
    \textbf{Image-generation samples,} targeting three visual conditions:
    an orange salamander, a turquoise butterfly, and a red mushroom.
    Samples correspond to MH steps 300, 1200, 1800, and 2700. Figure~\ref{fig:image_generation_full} presents additional samples. Notice that only \texttt{Pref-MH} achieves the three desired properties.
    }
    \label{fig:image_generation}
    
\end{figure}

\subsection{De novo molecular design from preference feedback}
\label{exp:molecular-design}

We next consider de novo molecular design, the task of generating candidate
molecules with desirable properties for downstream drug-discovery campaigns.
This is a central challenge in computational drug discovery, where the goal is to produce plausible
candidates that can be inspected, compared, and prioritized for experimental or
medicinal-chemistry follow-up. MCMC-based molecular design methods such as MARS \citep{xie2021mars}
instantiate this idea by using multiple hand-crafted cheminformatics scores; specifically, in our experiment we implemented it with proxies for drug-likeness (QED score~\citep{Bickerton2012}) and synthetic accessibility (SA score \citep{Ertl2009}).

While useful, such scores capture only selected aspects of molecular quality and may fail to reflect the full complexity of medicinal-chemistry decision-making.
\citet{choung2023extracting} make this point explicit and developed a surrogate score model, called MolSkill, intended to capture aspects of chemist preference that are not well explained by standard cheminformatics descriptors. The MolSkill score model was trained on collected preference data from expert medicinal chemists. We will use this model as an external evaluator in our experiments.

Inspired by MARS \citep{xie2021mars}, we use a proposal kernel $q$ that, given a current molecule $x$, generates a molecule $x'$ by either randomly deleting a local fragment from $x$ or randomly adding a local fragment from a pre-defined vocabulary. Our \texttt{Pref-MH} uses Qwen3-235B-A22B-Instruct \citep{yang2025qwen3technicalreport} as a pairwise judge, prompted to choose which of two SMILES molecules is more promising for medicinal-chemistry follow-up, with $N=8$. We include \texttt{Pointwise-MH} as a natural baseline, which uses the same Qwen model to provide a pointwise score using the same prompt but now asking the model to rate each molecule from 1 to 10. We query the judge 8 times and average the results, ensuring that the LLM-based baselines use the same compute budget. Additionally, we compare against \texttt{MARS} that uses QED and SA scores, serving as a strong baseline. For this baseline, we implement the original \texttt{MARS} proposal kernel, which learns a model $q$ for proposing local fragment edits.

We run each method for 1000 MCMC steps and evaluate the final state of each chain using the following molecular-design metrics: (1) QED for drug-likeness~\citep{Bickerton2012}; (2) SA for synthetic accessibility~\citep{Ertl2009}, rescaled so that higher values indicate easier synthesis; (3) diversity~\citep{xie2021mars}; and (4) MolSkill~\citep{choung2023extracting}, reported by both mean and median, as our primary external learned proxy for medicinal-chemist follow-up preference. See Appx.~\ref{app:exp-molecular} for implementation details.

\begin{table}[t]
\centering
\small
\setlength{\tabcolsep}{5pt}
\begin{tabular}{@{}lccccc@{}}
\toprule
Method
& QED
& SA
& Div.
& MolSkill mean
& MolSkill median \\
\midrule
\texttt{MARS}
& .773 {\scriptsize$\pm$.002}
& .789 {\scriptsize$\pm$.006}
& .895 {\scriptsize$\pm$.005}
& -0.226 {\scriptsize$\pm$.544}
& -0.157 {\scriptsize$\pm$.464} \\
\texttt{Pointwise-MH}
& .606 {\scriptsize$\pm$.006}
& .761 {\scriptsize$\pm$.004}
& .900 {\scriptsize$\pm$.001}
& -0.098 {\scriptsize$\pm$.261}
& .303 {\scriptsize$\pm$.309} \\
\texttt{Pref-MH}
& .698 {\scriptsize$\pm$.002}
& .781 {\scriptsize$\pm$.001}
& .899 {\scriptsize$\pm$.001}
& -1.432 {\scriptsize$\pm$.398}
& -1.116 {\scriptsize$\pm$.396} \\
\bottomrule
\end{tabular}
\caption{
\textbf{Molecular design from preference feedback.}
Final-state evaluation after 1000 MCMC steps across cheminformatics metrics and MolSkill. Values report the mean and standard error across the independent runs; higher is better for all metrics, except for MolSkill, for which lower is better.}
\label{tab:molecular-design}
\end{table}

Table~\ref{tab:molecular-design} shows that \texttt{Pref-MH} achieves the best MolSkill mean and median, suggesting that pairwise LLM feedback steers the chain toward molecules that are better aligned with medicinal-chemist follow-up preference than either explicit cheminformatics scores or pointwise LLM scores. At the same time, the remaining metrics indicate that this improvement does not come at a major cost to other quality measures: \texttt{Pref-MH} maintains high diversity and a high SA score.

\section{Discussion}

\texttt{Pref-MH} is a flexible, exact MCMC sampler using only pairwise comparisons with an arbitrary proposal kernel over a general state space. Our work makes the parametric BT assumption that there exists an unknown latent score underlying judges' preferences. This assumption is widely used in modern generative modeling, discrete choice modeling, and beyond, but it may not hold exactly in practice when using LLMs, VLMs, or humans as a judge. A promising future direction is to characterize how violations of the BT assumption affect the coherence of the stationary distribution.

More broadly, our experiments highlight the possibility of treating comparative feedback not merely as an evaluation signal, but as a direct interface for defining sampling distributions. This is especially relevant in domains where the desired notion of quality is difficult to reduce to a pointwise score. In our experiments, this role is played by an LLM judge, yet the same mechanism could in principle be used with human experts. Relying on human judges, however, requires the development of efficient versions of \texttt{Pref-MH} that utilize its ability to incorporate clever proposal functions, as otherwise the annotation burden can be infeasible. This opens a broader route for incorporating powerful generative models and expert comparative feedback into high-stakes design tasks.
\clearpage

\bibliographystyle{unsrtnat}
\bibliography{references}

\clearpage

\appendix

\section{Additional Related Work}
\label{app:related-work}

\paragraph{Inference-time guided generation.}
The need to generate content that satisfies specified properties has motivated
methods for user-guided inference. A natural approach, known as best-of-$N$,
generates multiple outputs from a base model and selects the candidate that
maximizes a learned reward, verifier score, or preference-model score
\citep{cobbe2021training,beirami2024theoretical}. While such procedures can
induce a sampling policy, they are typically used to return a single selected
output per query rather than to sample from an explicitly specified conditional
distribution. Furthermore, their reliance on proxy rewards makes guidance
indirect and susceptible to overoptimization risks
\citep{gao2023scaling}. Accommodating multiple desired properties also requires
additional design choices, such as scalarizing, transforming, or combining
multiple objectives or reward models
\citep{roijers2013survey,hayes2022practical,wang2024transforming}. A different class of methods uses controlled decoding to steer the generative
process. Examples include plug-and-play generation with attribute classifiers
\citep{dathathri2020plug}, future-discriminator guidance
\citep{yang2021fudge}, generative-discriminator guidance
\citep{krause2021gedi}, expert/anti-expert decoding
\citep{liu2021dexperts}, energy-based constrained decoding
\citep{qin2022cold}, reward-augmented decoding
\citep{deng2023reward}, and prefix-scorer based controlled decoding
\citep{mudgal2024controlled}. These approaches offer some degree of
distributional control, but they rely on pointwise scores, classifiers, reward
models, or learned prefix scorers for providing guidance. Our approach differs
in three key respects: (i) it explicitly aims to sample from the generative
distribution conditioned on the desired property as defined by a judge, (ii) it supports guidance via
pairwise-comparison judgments, and (iii) it admits rigorous convergence
guarantees.

\paragraph{MCMC from inexact inputs.}
Our approach relates to an established line of research in the sampling
literature. This field concerns MCMC methods for settings in which the ideal MH
acceptance ratio cannot be evaluated exactly, but one still wants to target the
desired distribution exactly. Pseudo-marginal methods achieve this by running MH
on an augmented state space, replacing the intractable unnormalized target
density or likelihood with a nonnegative unbiased estimator
\citep{beaumont2003estimation,andrieu2009pseudo}. Related exchange algorithms
for doubly intractable distributions introduce auxiliary draws so that
parameter-dependent normalizing constants cancel from the acceptance ratio
\citep{murray2006mcmc}. Bernoulli-factory MCMC instead constructs the
accept/reject decision directly from random coins, rather than first estimating
the full acceptance probability
\citep{gonccalves2017barker,vats2022efficient}. We follow a similar yet
distinct philosophy: rather than constructing an unbiased likelihood estimator
or evaluating an intractable pointwise ratio, we construct the accept/reject
decision directly from stochastic pairwise comparisons, in a way that preserves
the target distribution exactly for any fixed comparison budget.

\paragraph{Preference learning and LLM judges.}
Pairwise comparisons have become ubiquitous in the generative modeling
literature. Human preferences elicited through pairwise feedback are routinely
used for model alignment, either through learned reward models
\citep{christiano2017deep,ziegler2019fine,stiennon2020learning,
ouyang2022training,bai2022training}
or directly through preference-optimization objectives
\citep{rafailov2023direct}. Recently, LLMs have also become widely used as
judges for evaluating and comparing model outputs, either through direct
scoring or rubric-based evaluation
\citep{liu2023g,zheng2023judging}
or through pairwise preference-based evaluation and leaderboards
\citep{liu2024aligning,chiang2024chatbot}.
Our approach is different: we use a model's comparative judgments to
\emph{guide generation} through an exact sampling procedure. Conceptually, this
connects to work on self-rewarding, self-evaluation, and self-refinement in
language models
\citep{yuan2024selfrewarding,madaan2023selfrefine,miao2024selfcheck}.
However, rather than relying on prompting, in-context learning, or iterative
self-improvement, our approach steers the generative process via a principled
conditional sampling procedure. It also enables one model to guide another, as
well as guidance along multiple preference dimensions. This represents, to our
knowledge, a novel perspective on model preferences.
\newpage

\section{Multi-judge Pref-MH with Exact $N$-vote Acceptance Algorithm}
\label{app:multi-judge-alg}

\begin{algorithm}[h]
\caption{Multi-judge \texttt{Pref-MH} with exact $N$-vote acceptance}
\label{alg:multi-judge}
\begin{algorithmic}[1]
\Require Initial state $x^{(0)}\in\mathcal X$, proposal kernel $q(\cdot\mid x)$, judges $J_1,\ldots,J_m$, comparisons per judge $N\ge 1$, iterations $T$
\For{$t=0,1,\ldots,T-1$}
    \State Propose $x' \sim q(\cdot\mid x^{(t)})$
    
    \State For $i=1,\ldots,m$, query $J_i$ independently $N$ times on $(x^{(t)},x')$, and obtain $J_i^{(1)},\ldots,J_i^{(N)}.$
    \State Set $K_i= \sum_{l=1}^N J_i^{(l)}$, and compute    
    \[
    x^{(t+1)}
    =
    \begin{cases}
    x', & \text{with probability }
    \min\!\left\{1,\;
    \dfrac{p_0(x')\,q(x^{(t)}\mid x')}{p_0(x^{(t)})\,q(x'\mid x^{(t)})}
    \cdot
    \prod_{i=1}^m \dfrac{K_i}{N-K_i+1}
    \right\}, \\[1.2ex]
    x^{(t)}, & \text{otherwise.}
    \end{cases}
    \]
\EndFor
\State \Return the trajectory $x^{(0)},x^{(1)},\ldots,x^{(T)}$
\end{algorithmic}
\end{algorithm}

\FloatBarrier

\section{Experimental Details and Additional Results}
\label{app:experiments}

This appendix provides the complete experimental setups underlying
the experiments in the main text, together with additional quantitative and
qualitative results. 

\subsection{Synthetic validation}
\label{app:exp-synth}

The synthetic experiment provides a controlled test of the central theoretical
claim in Section~\ref{exp:synthetic}. Here the target distribution is known
exactly, allowing us to verify whether a sampler based only on binary comparison
outcomes converges to the intended stationary distribution. At the same time,
the construction is deliberately nontrivial: the base distribution is
non-uniform, the target contains several separated modes, and the sampler must
account for both the preference-induced tilt and the base-density ratio.

\paragraph{Experimental setup.}
The state space is the finite set $\mathcal X = \{0,1,\ldots,240\}.$

We define the non-uniform base distribution
\[
    p_0(k)
    =
    \frac{
    \exp\left\{
    -\frac{1}{2}\left(\frac{k-68}{56}\right)^2
    \right\}
    }{
    \sum_{\ell=0}^{240}
    \exp\left\{
    -\frac{1}{2}\left(\frac{\ell-68}{56}\right)^2
    \right\}
    }.
\]
To construct the preference-induced tilt, let
$
    g_{a,\tau}(k)
    =
    \exp\left\{
    -\frac{1}{2}\left(\frac{k-a}{\tau}\right)^2
    \right\},
$
and define the latent score
$
    s(k)
    =
    \frac{5}{240}k
    +0.9\,g_{60,13}(k)
    +0.7\,g_{130,21}(k)
    +1.0\,g_{200,13}(k).
$

The resulting target distribution is
\[
    \pi(k)
    =
    \frac{p_0(k)\exp(s(k))}
    {\sum_{\ell=0}^{240}p_0(\ell)\exp(s(\ell))}.
\]
Although this target is available for evaluation, neither \texttt{Pref-MH}
nor the plug-in baseline is given direct access to $s(k)$ or $\pi(k)$ when
making accept/reject decisions.

For a proposed move from $k$ to $k'$, the synthetic judge follows the
Bradley--Terry model
\[
    p_J(k \prec k')
    =
    \sigma\bigl(s(k')-s(k)\bigr),
    \qquad
    \sigma(u)=\frac{1}{1+\exp(-u)}.
\]
Given a comparison budget $N$, we draw
$
    K \sim \mathrm{Binomial}\left(N,\,p_J(k\prec k')\right),
$
where $K$ is the number of votes favoring the proposal.

All methods use the same symmetric proposal. With probability $\gamma=0.88$,
we draw a global proposal uniformly from $\{0,\ldots,240\}$; with probability
$1-\gamma=0.12$, we use a local nearest-neighbor move. For an interior state
$1\leq k\leq239$: $L(k\mid k)=\frac{1}{2},
    L(k-1\mid k)=L(k+1\mid k)=\frac{1}{4}.
$
At the boundaries, probability mass that would leave the state space is kept at
the current state, so that:
$
    L(0\mid0)=\frac{3}{4},L(1\mid0)=\frac{1}{4},
$
and analogously,
$
    L(240\mid240)=\frac{3}{4}, L(239\mid240)=\frac{1}{4}.
$
The full proposal is therefore
$
    q(k'\mid k)
    =
    \gamma\,\mathrm{Unif}\{0,\ldots,240\}(k')
    +(1-\gamma)L(k'\mid k).
$
 Since $q$ is symmetric, the proposal ratio cancels and
\[
    r_0(k,k')=\frac{p_0(k')}{p_0(k)}.
\]

\paragraph{Sampling methods.}
We compare three samplers that share the same proposal kernel. \texttt{Oracle
MH} has direct access to the latent score and accepts according to
\[
    \alpha_{\mathrm{oracle}}(k,k')
    =
    \min\left\{
    1,\,
    \frac{p_0(k')\exp(s(k'))}{p_0(k)\exp(s(k))}
    \right\}.
\]
This sampler is unavailable in the preference-only setting, but provides a
reference for the behavior of a valid chain targeting $\pi$.

\texttt{Pref-MH} instead uses only the $N$ binary judge outcomes. Given
$K\sim\mathrm{Binomial}(N,p_J(k\prec k'))$, it accepts with probability
\[
    \alpha_N(k,k';K)
    =
    \min\left\{
    1,\,
    \frac{p_0(k')}{p_0(k)}
    \frac{K}{N-K+1}
    \right\}.
\]
We evaluate $N\in\{1,2,4\}$.

Finally, \texttt{Plug-in} uses the same $N$ judge votes but replaces
the exact vote-based construction with the empirical win-rate odds. For
$0<K<N$, it forms
\[
    \widehat w(k,k')
    =
    \frac{K/N}{(N-K)/N}
    =
    \frac{K}{N-K},
\]
and accepts according to
\[
    \alpha_{\mathrm{plug\text{-}in}}(k,k';K)
    =
    \min\left\{
    1,\,
    \frac{p_0(k')}{p_0(k)}
    \frac{K}{N-K}
    \right\}.
\]
For $K=0$ and $K=N$, we set the acceptance probability to $0$ and $1$,
respectively. As established by Theorem~\ref{thm:no-fixed-budget-mh}, this
natural plug-in construction is not guaranteed to preserve the desired target.

\paragraph{Evaluation and results.}
All chains are initialized at $k=0$ and run for $150{,}000$ MH steps. Because
our goal is to display convergence from a common nonstationary initialization,
we do not discard burn-in samples in the convergence plot. Every $500$ steps,
we compute the cumulative empirical distribution
\[
    \widehat\pi_t(k)
    =
    \frac{1}{t}\sum_{r=1}^{t}\mathbf 1\{X_r=k\},
\]
and report its total variation distance from the true target,
\[
    \mathrm{TV}(\widehat\pi_t,\pi)
    =
    \frac{1}{2}
    \sum_{k=0}^{240}
    \left|\widehat\pi_t(k)-\pi(k)\right|.
\]
This is the quantity shown in Figure~\ref{fig:synthetic-convergence}. To make
the limiting behavior directly visible, we also run the methods with $N=2$ and
compare their empirical state distributions with the exact target in
Figure~\ref{fig:synthetic-distribution}.

\begin{figure}[t]
    \centering
    \includegraphics[width=0.55\linewidth]{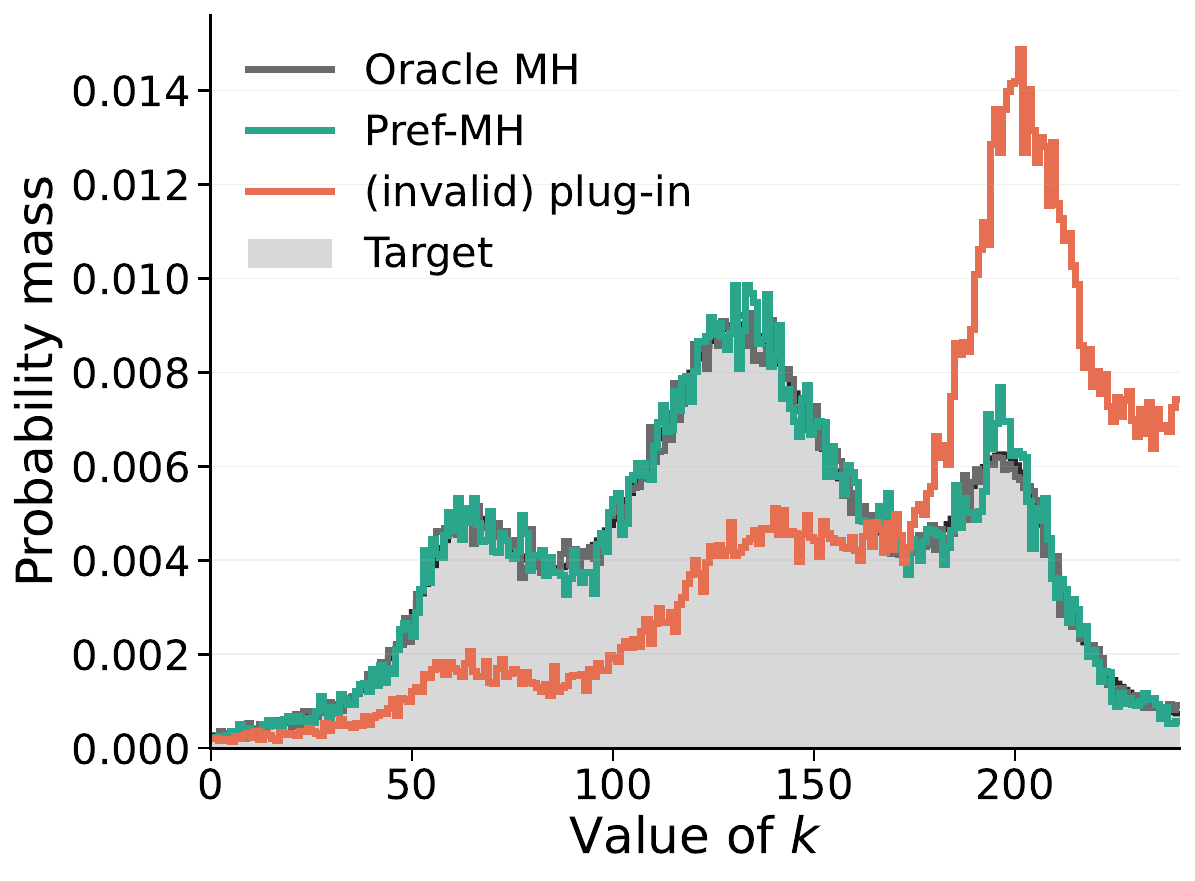}
    \caption{
    \textbf{Synthetic target distribution and empirical samples.}
    We compare the true target distribution over
    $k\in\{0,\ldots,240\}$ with the empirical distributions obtained by
    \texttt{Oracle MH}, \texttt{Pref-MH}, and the invalid plug-in sampler.
    The preference-based methods use $N=2$ judge votes per proposal.
    }
    \label{fig:synthetic-distribution}
\end{figure}

The experiment isolates the effect of the accept/reject construction: the BT
model is exactly satisfied and the target is known. As expected, both
\texttt{Oracle MH} and \texttt{Pref-MH} continue to approach the true target,
with \texttt{Pref-MH} doing so using only a fixed number of binary comparisons
at each transition. The plug-in sampler receives exactly the same comparison
budget, but its TV distance plateaus away from zero. This separation illustrates
the practical consequence of the impossibility result: even under a correctly
specified comparison model, replacing the exact construction by an empirical
odds estimate can change the stationary distribution.

\subsection{Multi-condition text generation}
\label{app:exp-stor}

We next provide the complete setup for the open-ended text experiment in
Section~\ref{exp:multi-cond}. The purpose of this experiment is twofold. First,
it tests whether preference-based MH can steer a language-model distribution
toward several subjective properties at once. Second, it compares two ways of
representing a multi-condition target: a collection of property-specific judges
and a single judge asked to assess the full conjunction.

\paragraph{Experimental setup.}
The task is to generate the opening paragraph of a short story. Each input is a
short scenario description, such as \textit{``a weary traveler arriving at a
deserted inn just after midnight.''} The base model receives the neutral system
prompt
\begin{quote}
\small
\textit{You are a fiction writer. Respond with only the requested opening
paragraph of a story. Do not add explanations, bullet points, titles, or
meta-commentary.}
\end{quote}
The neutral wording is intentional: it prevents the base generator from being
directly instructed to satisfy the hidden evaluation criteria. We use $50$
scenario prompts, limit each opening to at most $192$ new tokens, and sample at
temperature $0.9$.

The target contains three stylistic properties:
$M_1$ is elevated literary language, $M_2$ is a gothic or dramatic atmosphere,
and $M_3$ is frequent direct dialogue. In the separate-judge variants, each
property is handled by its own judge; in the joint variants, one judge is asked
to assess all three properties simultaneously.

We use Llama-3.1-8B-Instruct as the base distribution $p_0$, as the judge, and
as the held-out evaluator, with distinct prompts for the three roles.

The proposal combines global refreshes with local suffix regeneration. With
probability $0.10$, we draw a fresh independent opening from $p_0$. Otherwise,
we sample a cut point uniformly from the full generation limit, retain the
current prefix up to that point, and ask the base model to regenerate the
remaining suffix conditional on both the original scenario and the retained
prefix (inspired by \citep{faria2024quest}). If the selected cut point lies beyond the end of the current response,
the proposal is identical to the current state. Conditional on the selected move
type and cut point, the base-model proposal factor cancels with the corresponding
base-distribution ratio. The acceptance decision therefore depends only on the
judge terms.

\paragraph{Sampling methods.}
We compare five methods. \texttt{Base LLM} draws directly from the neutral base
model and applies no accept/reject correction. It therefore measures what the
unconditioned generator produces without being told the target properties.

The two preference-based methods differ only in how the target is decomposed.
\texttt{Pref-MH (sep.)} uses one pairwise judge for each property. For every
proposal $x'$ from the current state $x$, judge $i$ is queried $N=3$ times and
$K_i$ records the number of votes favoring the proposal. Since the
proposal/base ratio cancels, the acceptance probability is
\[
    \alpha_{\mathrm{sep}}(x,x';K_1,K_2,K_3)
    =
    \min\left\{
    1,\,
    \prod_{i=1}^{3}
    \frac{K_i}{N-K_i+1}
    \right\}.
\]
\texttt{Pref-MH (joint)} instead uses a single pairwise judge prompted to assess
the three properties jointly. If $K$ of the $N=3$ votes favor the proposal, it
accepts according to
\[
    \alpha_{\mathrm{joint}}(x,x';K)
    =
    \min\left\{
    1,\,
    \frac{K}{N-K+1}
    \right\}.
\]

The pointwise baselines mirror the same decomposition while replacing binary
comparisons with absolute ratings. In \texttt{Pointwise-MH (sep.)}, each of the
three judges assigns an integer score from $1$ to $10$, which is normalized to
$[0,1]$. Writing $s_i(x)$ for the normalized score under property $i$, the
proposal is accepted with probability
\[
    \alpha_{\mathrm{abs,sep}}(x,x')
    =
    \min\left\{
    1,\,
    \exp\left(
    \sum_{i=1}^{3}
    \bigl(s_i(x')-s_i(x)\bigr)
    \right)
    \right\}.
\]
\texttt{Pointwise-MH (joint)} uses a single absolute-score judge for the full
conjunction. If $s_{\mathrm{joint}}(x)$ denotes its score, the acceptance
probability is
\[
    \alpha_{\mathrm{abs,joint}}(x,x')
    =
    \min\left\{
    1,\,
    \exp\left(
    s_{\mathrm{joint}}(x')-s_{\mathrm{joint}}(x)
    \right)
    \right\}.
\]

\paragraph{Evaluation and results.}
For each prompt and each MH method, the chain is initialized with a direct sample
from the base model and run for $300$ steps. We discard the first $50$ states as
burn-in and retain the final $50$ states for evaluation, without additional
thinning.

We evaluate the resulting openings using both pointwise and pairwise judgments.
For pointwise evaluation, a held-out prompt asks the evaluator to score elevated
language, gothic atmosphere, direct dialogue, and overall quality on a $1$--$10$
scale. For pairwise evaluation, the evaluator compares outputs from different
methods head-to-head for each individual property. We then fit a
Bradley--Terry model to these binary comparisons and report the resulting BT
coefficients. Figure~\ref{fig:story_radar} normalizes each criterion across
methods for visualization only; Table~\ref{tab:story-results} reports the
underlying values.

\begin{table}[t]
\centering
\caption{
\textbf{Multi-condition text generation results.}
Pointwise scores are normalized to $[0,1]$ and reported as mean $\pm$
standard deviation over evaluated samples. BT columns report Bradley--Terry
coefficients fitted from pairwise comparisons for each criterion. Higher is
better for all metrics.
}
\label{tab:story-results}
\resizebox{\linewidth}{!}{
\begin{tabular}{lcccccc}
\toprule
\multirow{2}{*}{Method}
&
\multicolumn{3}{c}{Pointwise score}
&
\multicolumn{3}{c}{BT coefficient}
\\
\cmidrule(lr){2-4}
\cmidrule(lr){5-7}
&
Elevated
&
Gothic
&
Dialogue
&
Elevated
&
Gothic
&
Dialogue
\\
\midrule
\texttt{Base LLM}
&
$0.801 \pm 0.009$
&
$0.744 \pm 0.111$
&
$0.552 \pm 0.161$
&
$-0.841$
&
$-0.902$
&
$-0.413$
\\
\texttt{Pointwise-MH (sep.)}
&
$0.803 \pm 0.018$
&
$0.767 \pm 0.089$
&
$0.585 \pm 0.161$
&
$-0.361$
&
$-0.474$
&
$-0.161$
\\
\texttt{Pointwise-MH (joint)}
&
$0.802 \pm 0.015$
&
$0.751 \pm 0.107$
&
$0.564 \pm 0.160$
&
$-0.675$
&
$-0.758$
&
$-0.504$
\\
\texttt{Pref-MH (joint)}
&
$0.810 \pm 0.030$
&
$0.786 \pm 0.091$
&
$0.610 \pm 0.174$
&
$0.295$
&
$0.320$
&
$-0.012$
\\
\texttt{Pref-MH (sep.)}
&
$\mathbf{0.817 \pm 0.037}$
&
$\mathbf{0.797 \pm 0.065}$
&
$\mathbf{0.693 \pm 0.147}$
&
$\mathbf{0.703}$
&
$\mathbf{0.755}$
&
$\mathbf{0.639}$
\\
\bottomrule
\end{tabular}
}
\end{table}

The unnormalized results support the same conclusion as the radar plot.
\texttt{Pref-MH (sep.)} performs best under every reported pointwise and
pairwise criterion. The advantage over \texttt{Pref-MH (joint)} is especially
pronounced for dialogue, suggesting that a judge asked to evaluate a complex
conjunction may provide a less discriminative signal than several
property-specific judges. Both preference-based variants also outperform their
pointwise counterparts, indicating that the improvement is not explained by the
use of MH alone, but by the comparative evaluation signal used to define the
accept/reject decisions.

\subsection{Image generation in continuous space}
\label{app:exp-img}

The image experiment demonstrates that \texttt{Pref-MH} is not tied to discrete
sequence spaces. Here the Markov chain evolves in the continuous latent space of
a diffusion model, while all semantic decisions are made by vision-language
judges applied to the decoded images. The experiment also tests the multi-judge
construction in a setting where prompting alone often produces only a subset of
the requested visual attributes.

\paragraph{Experimental setup.}
The state of the chain is a latent noise tensor $z$. We use SDXL-Turbo \citep{sauer2024adversarial, podell2024sdxl} as the
base text-to-image model and decode each latent state using the fixed prompt
\begin{quote}
\small
\textit{A macro photograph of a glass terrarium interior: a bright orange
salamander climbing on a piece of grey driftwood, a small turquoise butterfly
flying in the terrarium, and a red mushroom with white spots growing at the
base near soil.}
\end{quote}
Images are generated at resolution $512\times512$. Since SDXL-Turbo maps
Gaussian latent noise to images, the base distribution over states is the
standard Gaussian prior, $p_0=\mathcal N(0,I)$.

The three semantic properties are
\[
M_1=\text{orange salamander on grey driftwood},
\]
\[
M_2=\text{turquoise butterfly flying in the scene},
\]
\[
M_3=\text{red mushroom with white spots near the soil}.
\]
We use Qwen3-VL-8B-Instruct \citep{bai2025qwen3} as the judge. Each property is assigned a separate
judge prompt, and each judge is instructed to focus only on its own criterion
rather than on the remaining objects or general image aesthetics.

Both MH methods use the same fixed mixture of a local pCN move, a wider pCN move,
and an independent refresh:
\[
q(z'\mid z)
=
0.6\,q_{\mathrm{local}}(z'\mid z)
+
0.3\,q_{\mathrm{wide}}(z'\mid z)
+
0.1\,q_{\mathrm{refresh}}(z'\mid z).
\]
The local and wide components are preconditioned Crank--Nicolson (pCN) proposals
\citep{cotter2013mcmc},
\[
    z'=\sqrt{1-\beta^2}\,z+\beta\xi,
    \qquad \xi\sim\mathcal N(0,I),
\]
with $\beta=0.08$ and $\beta=0.25$, respectively. The refresh component draws
$z'\sim\mathcal N(0,I)$. Each component is reversible with respect to the
Gaussian prior, and a fixed mixture preserves this reversibility. Consequently,
\[
    r_0(z,z')
    =
    \frac{p_0(z')q(z\mid z')}{p_0(z)q(z'\mid z)}
    =1.
\]

\paragraph{Sampling methods.}
The \texttt{Base} method consists of independent latent draws from
$\mathcal N(0,I)$ decoded with the same target-property prompt. It therefore
tests whether direct prompting is sufficient to satisfy the three visual
constraints without any sampling correction.

For \texttt{Pref-MH}, each of the three judges compares the current and proposed
images. Candidate order is randomized before every comparison to reduce
position bias. We query each judge $N=9$ times and let $K_i$ denote the number
of votes favoring the proposal under criterion $i$. Since $r_0(z,z')=1$, the
proposal is accepted with probability
\[
    \alpha_{\mathrm{Pref}}(z,z';K_1,K_2,K_3)
    =
    \min\left\{
    1,\,
    \prod_{i=1}^{3}
    \frac{K_i}{N-K_i+1}
    \right\}.
\]

\texttt{Pointwise-MH} uses the same proposal and the same three criteria, but
replaces each pairwise judge with an absolute VLM score from $0$ to $10$, which
is normalized to $[0,1]$. If $s_i(z)$ denotes the normalized score assigned to
the image decoded from $z$, then
\[
    \alpha_{\mathrm{point}}(z,z')
    =
    \min\left\{
    1,\,
    \exp\left(
    \sum_{i=1}^{3}
    \bigl(s_i(z')-s_i(z)\bigr)
    \right)
    \right\}.
\]

\paragraph{Evaluation and results.}
We run each MH chain for $3000$ steps. The main-text comparison reports samples
from steps $300$, $1200$, $1800$, and $2700$. To show how the methods behave throughout
the trajectory rather than only at a few selected states, we additionally save
samples every $300$ steps. These trajectories are presented in
Figure~\ref{fig:image_generation_full}.

\begin{figure}[!t]
    \centering
    \includegraphics[
        height=0.8\textheight,
        keepaspectratio
    ]{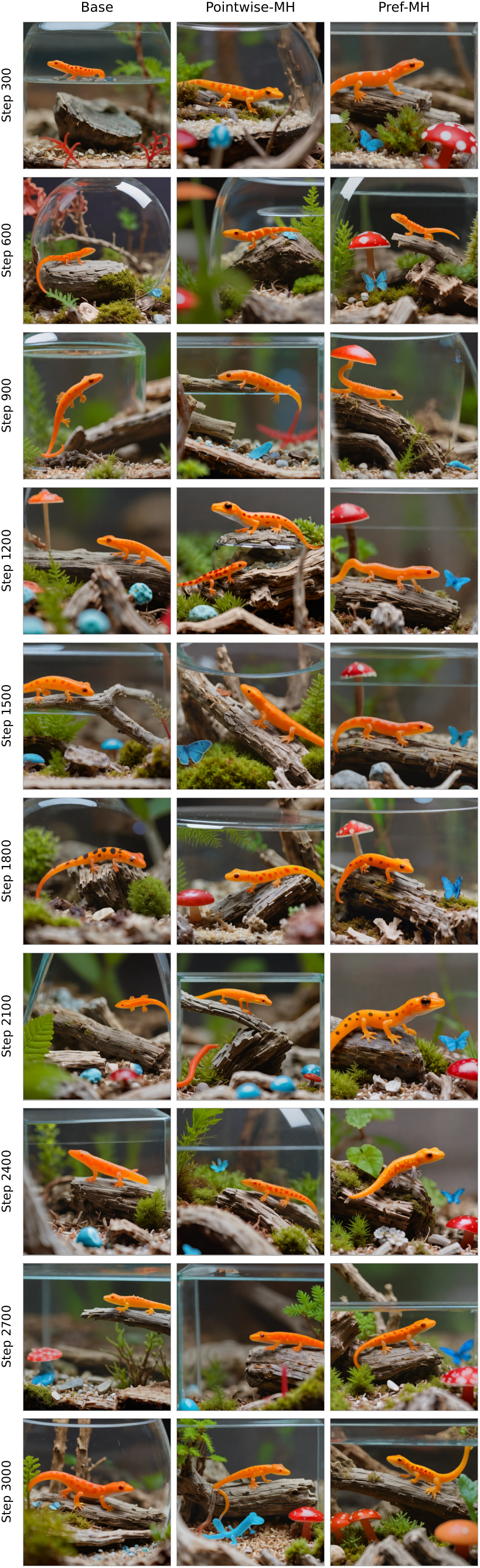}
    \caption{
    \textbf{Additional image-generation trajectory samples.}
    Columns show \texttt{Base}, \texttt{Pointwise-MH}, and \texttt{Pref-MH}
    from left to right; rows show samples every $300$ steps from step $300$ to
    step $3000$.
    }
    \label{fig:image_generation_full}
\end{figure}

\FloatBarrier

In addition to this qualitative trajectory visualization, we perform an
automated quantitative evaluation using models that are separate from the
Qwen3-VL judges used during sampling. To measure simultaneous adherence to the
three desired visual conditions, we use Gemma-3-12B-IT \citep{gemmateam2025gemma3} as a VLM classifier and
report the fraction of images in which all three target objects are detected.
We additionally evaluate overall visual quality using Aesthetic Predictor V2.5 \citep{aestheticpredictorv25}.
Neither evaluator is used by any method during sampling.

The quantitative results reinforce the
qualitative trajectories. \texttt{Pref-MH} satisfies all three desired object
constraints in $63.6\%$ of evaluated images, compared with only $7.4\%$ for
\texttt{Pointwise-MH} and $4.5\%$ for the base model. At the same time, the
average aesthetic scores remain comparable across methods, with
\texttt{Pref-MH} obtaining a slightly higher mean score.  Together with the trajectory samples in
Figure~\ref{fig:image_generation_full}, these results illustrate that the
preference-based construction can operate directly over a continuous latent
space and combine several specialized semantic judges without collapsing them
into a single pointwise score.

\subsection{De novo molecular design}
\label{app:exp-molecular}

We provide additional details for the molecular-design experiment in
Section~\ref{exp:molecular-design}. This experiment tests whether pairwise
feedback can provide a useful guidance signal for molecular generation when the
desired notion of quality is difficult to capture through a small collection of
explicit cheminformatics scores.

\paragraph{Experimental setup.}
We follow the molecular-generation setup of \texttt{MARS} \citep{xie2021mars}. The state
space consists of valid molecules represented as SMILES, and proposals modify
the current molecule through local fragment additions and deletions. The
fragment vocabulary is derived from ChEMBL~\citep{chembl} and contains $1{,}000$
fragments with at most $10$ heavy atoms. All chains are initialized from
$x^{(0)}=\mathtt{CC}$, and generated molecules are restricted to at most $40$
heavy atoms.

For the LLM-guided methods, we use Qwen3-235B-A22B-Instruct-2507
\citep{yang2025qwen3technicalreport} to assess how promising a molecule is for
medicinal-chemistry follow-up. The evaluation considers general
medicinal-chemistry factors such as drug-likeness, synthetic accessibility,
developability, and potentially problematic chemical motifs.

\paragraph{Sampling methods.}
The \texttt{MARS} baseline is guided by QED and synthetic accessibility. Specifically,
we use
\[
s_{\mathrm{MARS}}(x)
=
\frac{1}{2}\left[
\min\{\mathrm{QED}(x),0.7\}
+
\min\{\widetilde{\mathrm{SA}}(x),0.7\}
\right],
\]
where $\widetilde{\mathrm{SA}}(x)=(10-\mathrm{SA}(x))/9$ rescales synthetic
accessibility so that larger values correspond to easier synthesis. A valid
proposal is accepted according to
\[
\alpha_{\mathrm{MARS}}(x,x')
=
\min\left\{
1,\,
r_0(x,x')
\left(
\frac{s_{\mathrm{MARS}}(x')}{s_{\mathrm{MARS}}(x)}
\right)^{\beta}
\right\},
\qquad \beta=30,
\]
where $\beta=30$ is the default value used by \texttt{MARS}
\citep{xie2021mars}.

For \texttt{Pref-MH}, the judge compares the current and proposed molecules
using $N=8$ binary votes. To mitigate positional bias, four comparisons present
the current molecule first and four present the proposed molecule first. If
$K$ votes favor the proposal, it is accepted according to
\[
\alpha_{\mathrm{Pref}}(x,x';K)
=
\min\left\{
1,\,
r_0(x,x')\frac{K}{N-K+1}
\right\}.
\]

\texttt{Pointwise-MH} uses the same LLM and medicinal-chemistry criterion but
asks the judge to assign each molecule an integer rating in
$\{1,\ldots,10\}$. As with \texttt{Pref-MH}, we use $N=8$ judge queries per
proposal and average the resulting ratings. The proposal is then accepted
according to
\[
\alpha_{\mathrm{Point}}(x,x')
=
\min\left\{
1,\,
r_0(x,x')
\left(
\frac{s_{\mathrm{LLM}}(x')}{s_{\mathrm{LLM}}(x)}
\right)^{\beta}
\right\},
\qquad \beta=30.
\]

Following the released \texttt{MARS} implementation, in this experiment we use
the simplified convention $r_0(x,x')=1$ for valid proposals, rather than
explicitly evaluating the corresponding base/proposal factor. We use the same
convention for all three methods, so that the experiment isolates the effect of
the different guidance signals within the \texttt{MARS} molecular-editing
framework.

\paragraph{Evaluation and results.}
We run each method for $1000$ MCMC steps using $200$ parallel chains and repeat
the experiment over $5$ independent random seeds. We evaluate the final state
of each chain and report aggregate results as mean $\pm$ standard error across
the independent runs.

We consider four complementary molecular-design metrics. QED
\citep{Bickerton2012} measures conventional drug-likeness; SA
\citep{Ertl2009} measures synthetic accessibility and is rescaled so that
higher values correspond to easier synthesis; and
diversity is computed from pairwise Tanimoto similarities between Morgan
fingerprints, following \citet{xie2021mars}. Finally, we evaluate the generated
molecules using MolSkill~\citep{choung2023extracting}, a learned proxy derived
from pairwise preferences of professional medicinal chemists. MolSkill is used
only for evaluation and is not available to any of the methods during
sampling. 

As reported in Table~\ref{tab:molecular-design}, \texttt{Pref-MH} achieves the
best MolSkill mean and median while maintaining high synthetic accessibility, and molecular diversity. This suggests that pairwise LLM
feedback captures aspects of medicinal-chemistry follow-up preference that are
not fully represented either by the QED--SA objective used by \texttt{MARS} or by an
elicited pointwise LLM score.

\subsection{Machine translation}
\label{app:exp-mt}

Finally, as an additional real-world evaluation, we consider machine translation and
compare \texttt{Pref-MH} against \texttt{QUEST} \citep{faria2024quest}, a
published MCMC method for quality-aware machine translation. \texttt{QUEST}
uses a learned scalar quality-estimation reward to tilt the base translation
distribution, whereas \texttt{Pref-MH} replaces this numerical acceptance
signal with pairwise judgments. We additionally include a pointwise LLM
baseline which shares the generator, proposal, judge, and per-step query budget
with \texttt{Pref-MH}, but elicits absolute ratings rather than pairwise
preferences.

\paragraph{Experimental setup.}
We evaluate on the WMT23 test sets \citep{kocmi-etal-2023-findings} in both
German$\rightarrow$English and English$\rightarrow$German, using $549$ and
$557$ paired source sentences, respectively. Following~\citet{faria2024quest}, the base distribution $p_0$ is
TowerInstruct-7B-v0.2 \citep{alves2024tower}, sampled at temperature
$\tau=0.8$ with a maximum of $256$ new tokens.

All methods use the suffix-regeneration proposal of
\citet{faria2024quest}. Given a current translation $y$ with
$n=|y|$ tokens, we sample an index $i\sim\mathrm{Unif}\{0,\ldots,n-1\}$,
retain the prefix $y_{<i}$, and regenerate the remaining suffix from the base
model conditioned on the source sentence. If $n'$ denotes the length of the
proposed translation, the corresponding base/proposal factor reduces to
\[
r_0(y,y')
=
\frac{p_0(y')q(y\mid y')}{p_0(y)q(y'\mid y)}
=
\frac{n}{n'}.
\]
Thus, all three samplers use the same proposal mechanism and differ only in how
translation quality enters the accept/reject decision.

For the LLM-guided methods, we use Qwen2.5-32B-Instruct
\citep{qwen2025qwen25technicalreport} as the judge. The judge receives the source sentence,
the candidate translation or translations, and a reference-free quality
estimate, but is never shown the human reference. The quality-estimation (QE)
signal is CometKiwi-XL (wmt23-cometkiwi-da-xl)
\citep{rei-etal-2023-scaling}, the same reference-free QE model used by
\texttt{QUEST}. We also provide this score to the \texttt{Pref-MH} and
\texttt{Pointwise-MH} judges so that all three methods have access to the same
quality-estimation information.

\paragraph{Sampling methods.}
\texttt{QUEST} \citep{faria2024quest} uses the CometKiwi score to define a
reward
$$
r(y,x)
=
\operatorname{logit}\!\left(
\operatorname{clip}\bigl(\mathrm{CometKiwi}(y,x),10^{-3},1-10^{-3}\bigr)
\right)
$$
and targets the corresponding Gibbs-tilted distribution. Its acceptance rule is
\[
\alpha_{\mathrm{QUEST}}(y,y')
=
\min\left\{
1,\,
\frac{n}{n'}
\exp\left(
\frac{r(y',x)-r(y,x)}{\beta}
\right)
\right\}.
\]
The temperature parameter $\beta$ controls the strength of the quality-reward
tilt, with smaller values placing greater weight on differences in the
CometKiwi reward. We evaluate
\[
\beta\in\{0.01,0.02,0.05,0.1,0.2,0.5\},
\]
covering the range considered by \citet{faria2024quest}.

\texttt{Pref-MH} uses $N=5$ independent pairwise judgments for each proposal.
The presentation order of the current and proposed translations is randomized
across queries. If $K$ of the $N$ votes favor the proposal, the acceptance
probability is
\[
\alpha_{\mathrm{Pref}}(y,y';K)
=
\min\left\{
1,\,
\frac{n}{n'}
\frac{K}{N-K+1}
\right\}.
\]

\texttt{Pointwise-MH} uses the same generator, proposal, QE input, judge model,
prompt structure, and $N=5$ judge-query budget. The only change is that the
judge evaluates one translation at a time and returns an integer rating in
$\{1,\ldots,10\}$. The $N$ ratings are averaged to obtain $s(y)$, and the
proposal is accepted according to
\[
\alpha_{\mathrm{Point}}(y,y')
=
\min\left\{
1,\,
\frac{n}{n'}
\exp\left(s(y')-s(y)\right)
\right\}.
\]
The score of the current state is cached, so the pointwise method uses the same
number of judge queries per proposal as \texttt{Pref-MH}.

\paragraph{Evaluation and results.}
Each chain is initialized with an ancestral sample from $p_0$ and run for
$128$ steps, following \citet{faria2024quest}. We use a chain-mean
evaluation protocol: the first $32$ states are discarded as burn-in, after
which $16$ evenly spaced states are retained and their evaluation scores are
averaged for each source sentence.

We report BLEU \citep{papineni-etal-2002-bleu}, chrF \citep{popovic-2015-chrf}, and TER
\citep{snover-etal-2006-study}, computed using \texttt{sacrebleu}
\citep{post-2018-call}, together with XCOMET-XL
\citep{guerreiro2024xcomet}. XCOMET-XL is a reference-based evaluator and is
used only for evaluation; neither it nor the human reference is available
during sampling. Statistical significance is assessed using paired bootstrap
resampling over source sentences with $10{,}000$ resamples, comparing each
baseline against \texttt{Pref-MH} at the $95\%$ level.

\begin{table}[t]
\centering
\scriptsize
\caption{
\textbf{Machine-translation results on WMT23.}
Methods are evaluated using the chain-mean protocol after $32$ burn-in steps
and $16$ retained states per source sentence. Unmarked baseline entries are
significantly worse than \texttt{Pref-MH} at the $95\%$ level;
$\dagger$ denotes a baseline that is significantly better, while
$\ddagger$ denotes no significant difference.
}
\label{tab:mt-results}
\resizebox{\linewidth}{!}{
\begin{tabular}{lcccccccc}
\toprule
\multirow{2}{*}{Method}
& \multicolumn{4}{c}{German $\rightarrow$ English ($n=549$)}
& \multicolumn{4}{c}{English $\rightarrow$ German ($n=557$)}
\\
\cmidrule(lr){2-5}\cmidrule(lr){6-9}
& BLEU $\uparrow$ & chrF $\uparrow$ & TER $\downarrow$ & XCOMET $\uparrow$
& BLEU $\uparrow$ & chrF $\uparrow$ & TER $\downarrow$ & XCOMET $\uparrow$
\\
\midrule
\texttt{QUEST} ($\beta{=}0.01$)
& 35.83 & 63.29 & 51.94 & $.9017^{\dagger}$
& 32.00 & 61.10 & 58.13 & $.8707^{\dagger}$ \\
\texttt{QUEST} ($\beta{=}0.02$)
& 35.99 & 63.13 & 51.82 & $.9025^{\dagger}$
& 32.39 & 61.31 & 57.70 & $.8700^{\dagger}$ \\
\texttt{QUEST} ($\beta{=}0.05$)
& 36.94 & 63.38 & 51.08 & $.8952^{\dagger}$
& 32.30 & 61.20 & 56.37 & $.8654^{\dagger}$ \\
\texttt{QUEST} ($\beta{=}0.1$)
& 37.04 & 63.41 & 49.88 & $.8907^{\ddagger}$
& 32.01 & 60.74 & 56.86 & $.8570^{\dagger}$ \\
\texttt{QUEST} ($\beta{=}0.2$)
& 36.73 & 62.97 & 50.13 & .8828
& 31.64 & 60.46 & 57.12 & .8418 \\
\texttt{QUEST} ($\beta{=}0.5$)
& 36.94 & 62.97 & 49.95 & .8763
& 30.91 & 59.87 & 57.54 & .8278 \\
\midrule
\texttt{Pointwise-MH}
& 37.44 & 63.31 & 49.26 & .8755
& 31.46 & 60.37 & 56.82 & .8252 \\
\midrule
\texttt{Pref-MH}
& \textbf{39.59} & \textbf{65.30} & \textbf{47.57} & .8886
& \textbf{34.09} & \textbf{62.98} & \textbf{54.07} & .8521 \\
\bottomrule
\end{tabular}
}
\end{table}

Table~\ref{tab:mt-results} shows a consistent advantage for
\texttt{Pref-MH} on the reference-based lexical metrics. It obtains the best
BLEU, chrF, and TER in both translation directions and significantly
outperforms every tested \texttt{QUEST} temperature as well as
\texttt{Pointwise-MH} on all three metrics.

XCOMET exhibits a different trade-off. At smaller values of $\beta$, where
\texttt{QUEST} places greater weight on the CometKiwi reward, \texttt{QUEST}
achieves higher XCOMET scores than \texttt{Pref-MH}. As $\beta$ increases,
this advantage disappears and eventually reverses. This is consistent with
XCOMET being more closely aligned with the COMET-family quality-estimation
signal directly optimized by \texttt{QUEST}. Importantly, no tested
\texttt{QUEST} configuration improves on \texttt{Pref-MH} simultaneously on
XCOMET and on any of BLEU, chrF, or TER. Thus, the preference judge appears to
provide a signal that is not reducible to maximizing the scalar QE reward and
whose gains generalize across several independent translation metrics.

The pointwise ablation further isolates the role of comparative feedback.
\texttt{Pref-MH} significantly outperforms \texttt{Pointwise-MH} on all four
evaluation metrics in both translation directions, despite the two methods
sharing the same generator, proposal, QE information, judge model, and
per-step query budget.

\subsection{Compute resources and assets}
\label{app:compute}

\paragraph{Compute resources.}
The synthetic validation experiment was run on CPU using an x86\_64 machine
with two Intel(R) Xeon(R) CPU cores at $2.20$ GHz and $55$ MiB of L3 cache. The
remaining experiments were run on an internal GPU cluster. We used machines
with the following configurations: $224$ Intel(R) Xeon(R) 6746E CPU cores,
$2.0$ TB RAM, and eight NVIDIA H200 GPUs with $143$ GB memory each; and $112$
Intel(R) Xeon(R) 6746E CPU cores, $2.0$ TB RAM, and eight NVIDIA RTX PRO 6000
Blackwell GPUs with $96$ GB memory each.

The machine-translation experiment required approximately $15$ hours per
translation direction, with six H200 GPUs processing disjoint dataset shards in
parallel. The image-generation experiment required approximately $6$ hours on
the GPU cluster, while the multi-condition text experiment required
approximately $10$--$12$ hours on a single GPU. The molecular-design experiment
required approximately $48$ hours on the GPU cluster. No pretrained generative
or judge model was fine-tuned; the reported compute was dominated by model
inference, judge queries, sampling, and evaluation. Additional exploratory runs
were used to refine the experimental setups but are not included in the
reported results.

\paragraph{Existing assets and licenses.}
Our experiments use existing models, datasets, codebases, and evaluation tools.
We use Llama-3.1-8B-Instruct under the Llama 3.1 Community License; WMT23 data
for research use under the WMT23 General Machine Translation task terms;
SDXL-Turbo under the corresponding Stability AI license; Qwen3-VL-8B-Instruct
under the Apache-2.0 license; Qwen3-235B-A22B-Instruct-2507 under the Apache-2.0
license; and COMET/XCOMET under the license specified by the corresponding
Unbabel model release.

For the molecular-design experiment, we build on the official MARS
implementation \citep{xie2021mars}, including its provided fragment vocabulary;
the MARS repository is released under the Creative Commons
Attribution-NonCommercial 4.0 license. We evaluate generated molecules using
the official MolSkill implementation of \citet{choung2023extracting}, which is
released under the MIT License.

\newpage

\section{Mathematical Proofs}
\label{app:proofs}

\subsection{Proof of Theorem~\ref{thm:no-fixed-budget-mh}}
\label{app:no-fixed-budget-mh}

The proof of the following theorem draws connections to other known hardness results for unbiased estimation of odds ratios \citep{degroot1959unbiased, lehmann1998theory, mendo2025estimating}.

\begin{proof}
Fix the pair \((x,x')\) and write
\[
p := p_J(x\prec x'),
\qquad
c := r_0(x,x').
\]
By assumption, \(c\in(0,\infty)\).

Suppose, toward a contradiction, that there exists a fixed-budget procedure
using only the \(N\) binary judge outcomes and additional internal randomness
whose marginal acceptance probability is equal to
\[
\alpha_{\mathrm{BT\text{-}MH}}(p)
=
\min\left\{
1,\,
c\frac{p}{1-p}
\right\}
\]
for every \(p\in(0,1)\).

For each possible outcome vector
\(z=(z_1,\ldots,z_N)\in\{0,1\}^N\), let \(a(z)\in[0,1]\) denote the
probability that the procedure accepts after observing \(z\), where the
probability is taken only over the additional internal randomness of the
procedure. Since the procedure does not have access to the unknown value of
\(p\), the function \(a\) is fixed as a function of the observed binary vector.

Because the judge queries are independent Bernoulli\((p)\) trials, the
probability of observing the vector \(z\) is
\[
\mathbb P_p\big((J^{(1)},\ldots,J^{(N)})=z\big)
=
p^{\sum_{t=1}^N z_t}
(1-p)^{N-\sum_{t=1}^N z_t}.
\]
Therefore, by summing over all possible binary outcome vectors, the marginal
acceptance probability of the procedure is
\[
A_N(p)
=
\sum_{z\in\{0,1\}^N}
a(z)\,
p^{\sum_{t=1}^N z_t}
(1-p)^{N-\sum_{t=1}^N z_t}.
\]
The right-hand side is a polynomial in \(p\) of degree at most \(N\).

By the assumed correctness of the procedure, this polynomial must satisfy
\[
A_N(p)
=
\min\left\{
1,\,
c\frac{p}{1-p}
\right\}
\qquad
\text{for all } p\in(0,1).
\]
Since \(c>0\), the interval \(0<p<1/(1+c)\) is nonempty. On this interval,
\[
c\frac{p}{1-p}<1,
\]
and hence
\[
A_N(p)=c\frac{p}{1-p}.
\]
Equivalently,
\[
(1-p)A_N(p)-cp=0
\]
for every \(p\in(0,1/(1+c))\).

Now define
\[
B(p):=(1-p)A_N(p)-cp.
\]
Since \(A_N(p)\) is a polynomial, \(B(p)\) is also a polynomial. We have just
shown that
\[
B(p)=0
\qquad
\text{for every }p\in(0,1/(1+c)).
\]
Thus \(B\) has infinitely many roots, since every point in this interval is a
root. But a nonzero polynomial of finite degree can have only finitely many
roots. Therefore \(B\) cannot be a nonzero polynomial; it must be the zero
polynomial. Hence
\[
B(p)\equiv 0
\]
as a polynomial identity.

In particular, this identity must also hold when evaluating the polynomial at
\(p=1\). But
\[
B(1)
=
(1-1)A_N(1)-c
=
-c,
\]
so \(B(1)=0\) would imply \(c=0\), contradicting the assumption that
\(c\in(0,\infty)\). Thus no such fixed-budget procedure can exist.
\end{proof}

\subsection{Proof of Theorem~\ref{thm:n-vote-exact}}
\label{app:n-vote-proof}

\begin{proof}
This theorem follows as the single-judge special case of the more general
multi-judge exactness result proved in Appx.~\ref{app:multi-judge}.
Indeed, set \(m=1\) in the multi-judge construction. Then the composite target
\[
\pi_m(x) \propto p_0(x)\prod_{i=1}^m p_{J_i}(M_i\mid x)
\]
reduces to
\[
\pi(x) \propto p_0(x)p_{J}(M\mid x),
\]
and the multi-judge acceptance rule
\[
\alpha_N^{(m)}(x,x';k_1, \ldots, k_m)
=
\min\!\left\{1,\;
r_0(x,x')\prod_{i=1}^m\frac{k_i}{N-k_i+1}
\right\}
\]
from observing the vote-count vector \(k=(k_1,\ldots,k_m)\), reduces exactly to the single-judge rule
\[
\alpha_N(x,x';K)
=
\min\!\left\{1,\;
r_0(x,x')\frac{K}{N-K+1}
\right\}.
\]

The multi-judge proof shows that, for every \(m\geq 1\), the corresponding
marginal acceptance probabilities satisfy
\[
\frac{\bar\alpha_N^{(m)}(x,x')}
     {\bar\alpha_N^{(m)}(x',x)}
=
\frac{\pi_m(x')q(x\mid x')}{\pi_m(x)q(x'\mid x)}.
\]
Taking \(m=1\) gives
\[
\frac{\bar\alpha_N(x,x')}
     {\bar\alpha_N(x',x)}
=
\frac{\pi(x')q(x\mid x')}{\pi(x)q(x'\mid x)}.
\]
Equivalently,
\[
\pi(x)q(x'\mid x)\bar\alpha_N(x,x')
=
\pi(x')q(x\mid x')\bar\alpha_N(x',x),
\]
which is detailed balance with respect to \(\pi\). Hence the single-judge
\(N\)-vote chain is reversible with respect to \(\pi\), and therefore has
\(\pi\) as a stationary distribution.
\end{proof}

\subsection{Convergence of the Pref-MH chain}
\label{app:pref-mh-convergence}

We give sufficient regularity conditions under which the stationary-distribution
result of Theorem~\ref{thm:n-vote-exact} yields convergence from an arbitrary
initialization. Fix a comparison budget $N\ge 1$. Averaging over the random
judge votes and the final accept/reject randomness, define the marginal
acceptance probability
\begin{equation}
\label{eq:alpha-bar}
\bar\alpha_N(x,y)
:=
\mathbb E_{K\sim \mathrm{Binomial}(N,p_J(x\prec y))}
\left[
\min\left\{
1,\,
r_0(x,y)\frac{K}{N-K+1}
\right\}
\right].
\end{equation}
The state sequence generated by Algorithm~\ref{alg:single-judge} is therefore a
time-homogeneous Markov chain with transition kernel
\begin{equation}
\label{eq:pref-mh-kernel}
P_N(x,A)
=
\int_{\mathcal X}
\left[
\bar\alpha_N(x,y)\mathbf 1\{y\in A\}
+
\left(1-\bar\alpha_N(x,y)\right)\mathbf 1\{x\in A\}
\right]
q(dy\mid x),
\end{equation}
for every measurable set $A\subseteq\mathcal X$.

Let $S=\operatorname{supp}(\pi)$.

\begin{assumption}[Proposal regularity]
\label{assump:proposal-regularity}
The proposal kernel $q$ satisfies the following conditions on $S$.

\begin{enumerate}
\item \textbf{$\pi$-irreducibility.}
For every $x\in S$ and every measurable $A\subseteq S$ with $\pi(A)>0$,
there exists $m\ge 1$ such that
\[
q^m(A\mid x)>0,
\]
where $q^m(\cdot\mid x)$ denotes the $m$-step kernel obtained by applying the
proposal kernel successively, without an accept/reject step.

\item \textbf{Compatible support.}
For every $x\in S$,
\[
q\!\left(
\left\{y\in S:r_0(x,y)\notin(0,\infty)\right\}
\,\middle|\,x
\right)
=0.
\]
In words, a proposal drawn from $q(\cdot\mid x)$ has a finite and strictly
positive baseline ratio $r_0(x,y)$ with probability one. In the discrete case,
this simply requires $r_0(x,y)\in(0,\infty)$ for every $y\in S$ satisfying
$q(y\mid x)>0$. When the base density is positive on $S$, this corresponds to
the usual forward--reverse support condition that an admissible forward
proposal also admits the corresponding reverse proposal.

\item \textbf{Target absolute continuity.}
For every $x\in S$,
\[
q(\cdot\mid x)\ll\pi.
\]
That is, any measurable set having zero probability under $\pi$ also has zero
proposal probability from every $x\in S$.
\end{enumerate}
\end{assumption}

The judge requires no additional assumption here. Under the BT
model in Section~\ref{sec:BT-integration}, the latent score is finite-valued,
$s:\mathcal X\to\mathbb R$, and hence
\[
p_J(x\prec y)
=
\sigma(s(y)-s(x))
\in(0,1)
\]
for every $x,y\in\mathcal X$.

We next show that the marginal Pref-MH kernel inherits the required
irreducibility and aperiodicity properties from the proposal.

\paragraph{$\pi$-irreducibility.}
Fix $x\in S$ and a proposed $y$ for which
$r_0(x,y)\in(0,\infty)$, and write $p=p_J(x\prec y)$. Since $p\in(0,1)$,
\begin{align}
\bar\alpha_N(x,y)
&\ge
\mathbb P(K=1)
\min\left\{1,\frac{r_0(x,y)}{N}\right\}
\nonumber\\
&=
Np(1-p)^{N-1}
\min\left\{1,\frac{r_0(x,y)}{N}\right\}
>0.
\label{eq:positive-marginal-acceptance}
\end{align}
Thus every admissible move of $q$ has strictly positive marginal probability
of being accepted by \texttt{Pref-MH}. Consequently, if an $m$-step path of
the proposal kernel reaches a measurable set $A$ with positive probability,
then the event that the same $m$ proposals are generated and accepted also has
positive probability. Hence
\[
q^m(A\mid x)>0
\quad\Longrightarrow\quad
P_N^m(x,A)>0.
\]
By Assumption~\ref{assump:proposal-regularity}(1), $P_N$ is therefore
$\pi$-irreducible on $S$.

\paragraph{Aperiodicity.}
For a proposed move $x\to y$, the event $K=0$ has probability
$(1-p_J(x\prec y))^N>0$, and on this event the proposal is rejected.
Therefore,
\begin{equation}
\label{eq:positive-self-loop}
P_N(x,\{x\})
\ge
\int_S
\left(1-p_J(x\prec y)\right)^N
q(dy\mid x)
>0
\qquad
\text{for every }x\in S.
\end{equation}
Thus the chain has a positive probability of remaining at its current state.
Together with $\pi$-irreducibility, this implies that $P_N$ is aperiodic.

\begin{proof}[Proof of Corollary~\ref{cor:pref-mh-conv}]
Fix $N\ge 1$. By Theorem~\ref{thm:n-vote-exact}, the marginal transition
kernel $P_N$ satisfies detailed balance with respect to $\pi$. Hence $\pi$ is
stationary for $P_N$.

As shown above, Assumption~\ref{assump:proposal-regularity} implies that $P_N$
is $\pi$-irreducible and aperiodic on $S$. The standard general-state-space
convergence theorem (Theorem~1 of \citet{tierney1994markov}),  therefore implies that there exists a measurable set
$G\subseteq S$ with $\pi(G)=1$ such that
\[
\left\|P_N^t(x,\cdot)-\pi\right\|_{\mathrm{TV}}
\longrightarrow 0
\qquad\text{as }t\to\infty
\]
for every $x\in G$
\citep{tierney1994markov,roberts2004general}.

It remains to extend the conclusion from $\pi$-almost every initialization to
every $x\in S$. Since $\pi(G)=1$, Assumption~\ref{assump:proposal-regularity}(3)
gives
\[
q(G\mid x)=1
\qquad
\text{for every }x\in S.
\]
Moreover, by \eqref{eq:positive-marginal-acceptance}, every admissible proposal
has strictly positive marginal acceptance probability. Thus, if $x\notin G$,
then while the chain remains at $x$, each transition has strictly positive
probability of accepting a proposal in $G$. Consequently, the hitting time
\[
\tau_G:=\inf\{t\ge 0:X_t\in G\}
\]
is finite almost surely.

Once the chain enters $G$, it remains in $G$: proposals lie in $G$ almost
surely by Assumption~\ref{assump:proposal-regularity}(3), and rejection leaves
the current state unchanged. Applying the Markov property at $\tau_G$ and the
convergence result on $G$ therefore yields
\[
\left\|P_N^t(x,\cdot)-\pi\right\|_{\mathrm{TV}}
\longrightarrow 0
\qquad\text{for every }x\in S.
\]
In particular, the law of $X_t$ converges to $\pi$ from every initialization in
the support of $\pi$.
\end{proof}

\subsection{Proof of Theorem~\ref{thm:peskun-optimality}}
\label{app:optimality-proof}

\begin{proof}
This theorem follows by taking \(m=1\) in the multi-judge optimality result
proved in Appx.~\ref{app:multi-judge}. In the general
multi-judge setting, a competing exact rule observes \(N\) binary comparisons
from each of \(m\) judges, and the multi-judge \(N\)-vote rule accepts a
proposal after observing the vote-count vector \(k=(k_1,\ldots,k_m)\) with
probability
\[
\alpha_N^{(m)}(x,x';k)
=
\min\!\left\{1,\;
r_0(x,x')\prod_{i=1}^m\frac{k_i}{N-k_i+1}
\right\}.
\]

Setting \(m=1\), the vote-count vector consists of a single count \(K\), and
the acceptance rule becomes
\[
\alpha_N(x,x';K)
=
\min\!\left\{1,\;
r_0(x,x')\frac{K}{N-K+1}
\right\},
\]
which is exactly the single-judge \(N\)-vote rule.

The multi-judge optimality proof shows that, for every exact fixed-budget
competitor using the same proposal kernel and the same number of judge queries,
the competitor's marginal acceptance probability is pointwise no larger than
that of the multi-judge \(N\)-vote rule:
\[
\bar\beta_N^{(m)}(x,x')
\le
\bar\alpha_N^{(m)}(x,x')
\qquad
\text{for every proposed move } x\to x'.
\]
Taking \(m=1\) gives
\[
\bar\beta_N(x,x')
\le
\bar\alpha_N(x,x')
\qquad
\text{for every proposed move } x\to x'.
\]
Since the proposal kernel \(q\) is the same for both procedures, integrating
this pointwise inequality over any set
\(A\subseteq\mathcal X\setminus\{x\}\) gives
\[
P_{\rm comp}(x,A)
\le
P_N^\star(x,A).
\]
Thus, from every current state \(x\), the single-judge \(N\)-vote sampler
assigns at least as much probability as any exact fixed-budget competitor to
moving into any set of states different from \(x\). This is precisely
Peskun--Tierney domination. Therefore, the single-judge \(N\)-vote rule is
optimal among exact fixed-budget acceptance rules using the same proposal
kernel and \(N\) judge queries.
\end{proof}

\subsection{Proofs for multiple judges}
\label{app:multi-judge}

\subsubsection{Multi-judge exactness}

\begin{theorem}[Exactness of the multi-judge $N$-vote acceptance rule]
\label{thm:multi-n-vote-exact}
Under the multi-judge BT parametric assumptions from
Section~\ref{sec:multi-judge}, for every integer \(N\ge 1\), the multi-judge
$N$-vote acceptance rule in Eq.~\eqref{eq:multi-n-vote-main}, combined with a
valid proposal kernel \(q\), defines a Markov transition kernel that satisfies
detailed balance with respect to the composite target \(\pi_m\). Consequently,
the resulting chain is reversible and has \(\pi_m\) as a stationary
distribution.
\end{theorem}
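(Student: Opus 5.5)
The plan is to verify detailed balance directly at the level of marginal acceptance probabilities, by exhibiting a measure-preserving pairing between vote outcomes for the forward move $x\to x'$ and for the reverse move $x'\to x$.

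Fix $x\neq x'$, write $c:=r_0(x,x')$, and note $r_0(x',x)=1/c$. Write $p_i:=p_{J_i}(x\prec x')$ and $\lambda_i:=p_i/(1-p_i)=e^{s_i(x')-s_i(x)}$. For the reverse move, judge $i$ prefers $x$ over $x'$ with probability $1-p_i$. The forward marginal acceptance is
\[
\bar\alpha^{(m)}_N(x,x')=\sum_{k\in\{0,\dots,N\}^m}\prod_{i=1}^m\binom{N}{k_i}p_i^{k_i}(1-p_i)^{N-k_i}\,\min\Big\{1,c\prod_i\tfrac{k_i}{N-k_i+1}\Big\},
\]
and the reverse one is the same sum with $p_i\leftrightarrow 1-p_i$ and $c\to 1/c$. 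By Eq.~\eqref{eq:multi-ratio-main}, it suffices to show $\bar\alpha^{(m)}_N(x,x')=c\prod_i\lambda_i\,\bar\alpha^{(m)}_N(x',x)$ whenever $c\in(0,\infty)$. The degenerate cases $c\in\{0,\infty\}$ give zero on both sides of detailed balance and are handled separately.

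The key step is the shift $k_i\mapsto k_i-1$ in each coordinate. In the forward sum, terms with some $k_i=0$ vanish because the factor $k_i/(N-k_i+1)$ is zero. For $k_i\ge1$, set $j_i=k_i-1\in\{0,\dots,N-1\}$ and use the binomial identity
\[
\binom{N}{j+1}p^{j+1}(1-p)^{N-j-1}=\lambda\,\frac{N-j}{j+1}\binom{N}{j}p^{j}(1-p)^{N-j},
\]
where $\binom{N}{j}p^j(1-p)^{N-j}$ equals the reverse-move probability of $N-j$ votes for $x$. With $\ell_i:=N-j_i$, the number of reverse votes favoring $x$, the forward summand becomes
\[
\prod_i \lambda_i\,\prod_i\tfrac{\ell_i}{N-\ell_i+1}\cdot P_{\mathrm{rev}}(\ell)\cdot \min\Big\{1,c\prod_i\tfrac{N-\ell_i+1}{\ell_i}\Big\},
\]
since $k_i/(N-k_i+1)=(N-\ell_i+1)/\ell_i$. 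Set $R:=c^{-1}\prod_i \ell_i/(N-\ell_i+1)$. The summand is then $\prod_i\lambda_i\,P_{\mathrm{rev}}(\ell)\,cR\min\{1,1/R\}=c\prod_i\lambda_i\,P_{\mathrm{rev}}(\ell)\min\{R,1\}$. Here $\min\{1,R\}$ is precisely the reverse acceptance $\alpha^{(m)}_N(x',x;\ell)$.

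The shifted range is $\ell_i\in\{1,\dots,N\}$. The missing reverse terms with some $\ell_i=0$ have zero reverse acceptance, so summing over $\ell$ recovers $c\prod_i\lambda_i\,\bar\alpha^{(m)}_N(x',x)$ exactly. Independence across judges makes the product factorization legitimate.

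The main obstacle is bookkeeping: checking that the reindexing is a bijection between the forward outcomes with nonzero acceptance and the reverse outcomes with nonzero acceptance, and that the identity $a\min\{1,b\}=\min\{a,ab\}$ is applied with the right orientation. Once the marginal-acceptance ratio identity is established, detailed balance in the form of Eq.~\eqref{eq:detailed-balance} for $\pi_m$ follows by multiplying through. Reversibility then gives stationarity by the standard integration of detailed balance against the kernel built as in Eq.~\eqref{eq:pref-mh-kernel}.
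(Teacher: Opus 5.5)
Your proposal is correct and is essentially the paper's argument. Your reindexing $\ell_i=N-k_i+1$ is exactly the paper's substitution $k_i=N-j_i+1$. You use the same two ingredients, the binomial ratio identity $\binom{N}{k-1}=\binom{N}{k}\frac{k}{N-k+1}$ and $\min\{1,A\}=A\min\{1,A^{-1}\}$, together with the observation that outcomes with some zero vote count contribute nothing on either side. The only difference is cosmetic: you absorb the odds factor $\lambda_i$ into a single shift identity for the binomial pmf, whereas the paper carries the powers of $p_i$ and $1-p_i$ along and factors out $\prod_i p_i/(1-p_i)$ at the end.
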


\begin{proof}
To prove exactness of the multi-judge rule, it is enough to verify detailed
balance. In our proposal--accept/reject setting, detailed balance means that
for every pair \(x\neq x'\),
\[
\pi_m(x)\,q(x'\mid x)\,\bar\alpha_N^{(m)}(x,x')
=
\pi_m(x')\,q(x\mid x')\,\bar\alpha_N^{(m)}(x',x),
\]
where \(\bar\alpha_N^{(m)}(x,x')\) denotes the marginal probability of
accepting \(x'\) after it is proposed from \(x\), averaging over the random
judge outcomes from all judges.

Fix a proposal pair \((x,x')\). We first dispose of degenerate proposal-ratio
cases. If \(q(x'\mid x)=0\), then the forward flow
\(\pi_m(x)q(x'\mid x)\bar\alpha_N^{(m)}(x,x')\) is zero. Moreover, if
\(q(x\mid x')>0\), then the reverse baseline ratio \(r_0(x',x)\) is zero, so
the reverse acceptance probability is zero for every realization of the judge
votes; if \(q(x\mid x')=0\), the reverse proposal probability itself is zero.
Thus detailed balance is immediate.

Similarly, if \(q(x'\mid x)>0\) but \(q(x\mid x')=0\), then
\(r_0(x,x')=0\), so the forward acceptance probability is zero for every vote
realization, while the reverse flow is zero because \(q(x\mid x')=0\). Hence
detailed balance is again immediate. The same conclusion holds if
\(p_0(x')=0\), since then \(\pi_m(x')=0\) and \(r_0(x,x')=0\).

Therefore, in the only nontrivial case, both proposal probabilities and both
base probabilities are positive, and hence
\[
0 < r_0(x,x') < \infty.
\]
We now restrict to this case.

Fix a proposal pair \((x,x')\), and write
\[
p_i := p_{J_i}(x\prec x'),
\qquad
c := r_0(x,x').
\]
Thus, when the chain proposes \(x'\) from \(x\), the number of votes from judge
\(J_i\) favoring \(x'\) satisfies
\[
K_i\sim \mathrm{Binomial}(N,p_i),
\]
independently across judges. The realized acceptance probability after
observing the vote-count vector \(k=(k_1,\ldots,k_m)\) is
\[
\alpha_N^{(m)}(x,x';k)
=
\min\!\left\{1,\;
c\prod_{i=1}^m \frac{k_i}{N-k_i+1}
\right\}.
\]
Therefore,
\[
\bar\alpha_N^{(m)}(x,x')
=
\sum_{k_1=0}^N\cdots\sum_{k_m=0}^N
\alpha_N^{(m)}(x,x';k)
\prod_{i=1}^m
\binom{N}{k_i}p_i^{k_i}(1-p_i)^{N-k_i}.
\]

For the reverse move from \(x'\) to \(x\), the comparison probability for
judge \(i\) is \(1-p_i\), and
\[
r_0(x',x)
=
\frac{p_0(x)\,q(x'\mid x)}
     {p_0(x')\,q(x\mid x')}
=
\frac{1}{r_0(x,x')}
=
\frac{1}{c}.
\]
We will show that
\[
\bar\alpha_N^{(m)}(x,x')
=
c\prod_{i=1}^m\frac{p_i}{1-p_i}\,
\bar\alpha_N^{(m)}(x',x).
\]

We first record the identities that will be used in the sum below. For each
\(j_i=1,\ldots,N\), the binomial coefficients satisfy
\[
\binom{N}{j_i-1}
=
\binom{N}{j_i}\frac{j_i}{N-j_i+1}.
\]
Indeed,
\[
\binom{N}{j_i-1}
=
\frac{N!}{(j_i-1)!(N-j_i+1)!}
=
\frac{N!}{j_i!(N-j_i)!}\frac{j_i}{N-j_i+1}
=
\binom{N}{j_i}\frac{j_i}{N-j_i+1}.
\]

Second, the forward and reverse realized acceptance probabilities satisfy
\[
\alpha_N^{(m)}(x,x';N-j+1)
=
c\prod_{i=1}^m\frac{N-j_i+1}{j_i}\,
\alpha_N^{(m)}(x',x;j),
\]
where \(j=(j_1,\ldots,j_m)\) and \(N-j+1\) denotes the vector
\[
(N-j_1+1,\ldots,N-j_m+1).
\]
To see this, note first that
\[
\alpha_N^{(m)}(x,x';N-j+1)
=
\min\!\left\{1,\;
c\prod_{i=1}^m\frac{N-j_i+1}{j_i}
\right\},
\]
because \(N-(N-j_i+1)+1=j_i\) for each coordinate \(i\). For the reverse move,
\(r_0(x',x)=1/c\), and hence
\[
\alpha_N^{(m)}(x',x;j)
=
\min\!\left\{1,\;
\frac{1}{c}\prod_{i=1}^m\frac{j_i}{N-j_i+1}
\right\}.
\]
Now set
\[
A := c\prod_{i=1}^m\frac{N-j_i+1}{j_i}.
\]
Then
\[
\alpha_N^{(m)}(x,x';N-j+1)=\min\{1,A\},
\qquad
\alpha_N^{(m)}(x',x;j)=\min\{1,A^{-1}\}.
\]
Since, for every \(A>0\),
\[
\min\{1,A\}=A\min\{1,A^{-1}\},
\]
we obtain
\[
\alpha_N^{(m)}(x,x';N-j+1)
=
c\prod_{i=1}^m\frac{N-j_i+1}{j_i}\,
\alpha_N^{(m)}(x',x;j).
\]

We now use these identities to compare the marginal forward and reverse
acceptance probabilities. Since
\(\alpha_N^{(m)}(x,x';k)=0\) whenever some \(k_i=0\), we may restrict the sum
to \(k_i\ge 1\) for all \(i\):
\[
\bar\alpha_N^{(m)}(x,x')
=
\sum_{k_1=1}^N\cdots\sum_{k_m=1}^N
\alpha_N^{(m)}(x,x';k)
\prod_{i=1}^m
\binom{N}{k_i}p_i^{k_i}(1-p_i)^{N-k_i}.
\]
Substituting \(k_i=N-j_i+1\) for each \(i=1,\ldots,m\) gives
\[
\bar\alpha_N^{(m)}(x,x')
=
\sum_{j_1=1}^N\cdots\sum_{j_m=1}^N
\alpha_N^{(m)}(x,x';N-j+1)
\prod_{i=1}^m
\binom{N}{j_i-1}
p_i^{N-j_i+1}(1-p_i)^{j_i-1}.
\]

Using the identities above, we have
\[
\begin{aligned}
&\alpha_N^{(m)}(x,x';N-j+1)
\prod_{i=1}^m \binom{N}{j_i-1}
\\
&=
\left(
c\prod_{i=1}^m\frac{N-j_i+1}{j_i}\,
\alpha_N^{(m)}(x',x;j)
\right)
\left(
\prod_{i=1}^m
\binom{N}{j_i}\frac{j_i}{N-j_i+1}
\right)
\\
&=
c\,\alpha_N^{(m)}(x',x;j)
\prod_{i=1}^m\binom{N}{j_i}.
\end{aligned}
\]
Therefore,
\[
\bar\alpha_N^{(m)}(x,x')
=
c
\sum_{j_1=1}^N\cdots\sum_{j_m=1}^N
\alpha_N^{(m)}(x',x;j)
\prod_{i=1}^m
\binom{N}{j_i}
p_i^{N-j_i+1}(1-p_i)^{j_i-1}.
\]
Factoring out \(\prod_{i=1}^m p_i/(1-p_i)\) gives
\[
\bar\alpha_N^{(m)}(x,x')
=
\left(
c\prod_{i=1}^m\frac{p_i}{1-p_i}
\right)
\sum_{j_1=1}^N\cdots\sum_{j_m=1}^N
\alpha_N^{(m)}(x',x;j)
\prod_{i=1}^m
\binom{N}{j_i}
p_i^{N-j_i}(1-p_i)^{j_i}.
\]
Since \(\alpha_N^{(m)}(x',x;j)=0\) whenever some \(j_i=0\), the sum on the
right is exactly
\[
\sum_{j_1=0}^N\cdots\sum_{j_m=0}^N
\alpha_N^{(m)}(x',x;j)
\prod_{i=1}^m
\binom{N}{j_i}
p_i^{N-j_i}(1-p_i)^{j_i}
=
\bar\alpha_N^{(m)}(x',x).
\]
Thus,
\[
\bar\alpha_N^{(m)}(x,x')
=
\left(
c\prod_{i=1}^m\frac{p_i}{1-p_i}\,
\right)
\bar\alpha_N^{(m)}(x',x).
\]

Finally, by the multi-judge Metropolis--Hastings ratio in
\eqref{eq:multi-ratio-main},
\[
c\prod_{i=1}^m\frac{p_i}{1-p_i}
=
\frac{\pi_m(x')\,q(x\mid x')}{\pi_m(x)\,q(x'\mid x)}.
\]
Substituting this into the previous display and multiplying both sides by
\(\pi_m(x)q(x'\mid x)\) gives
\[
\pi_m(x)\,q(x'\mid x)\,\bar\alpha_N^{(m)}(x,x')
=
\pi_m(x')\,q(x\mid x')\,\bar\alpha_N^{(m)}(x',x),
\]
which is detailed balance with respect to \(\pi_m\). Hence the induced
multi-judge Markov chain is reversible with respect to \(\pi_m\), and
\(\pi_m\) is stationary.
\end{proof}

\subsubsection{Convergence of the multi-judge Pref-MH chain}
\label{app:multi-pref-mh-convergence}

\begin{corollary}[Convergence of the multi-judge Pref-MH chain]
\label{cor:multi-pref-mh-conv}
Fix $N\ge 1$. Under Assumption~\ref{assump:proposal-regularity},
with $\pi$ replaced by $\pi_m$, the Markov chain generated by the
multi-judge \texttt{Pref-MH} algorithm converges in total variation to
$\pi_m$ from every initialization in $\operatorname{supp}(\pi_m)$.
\end{corollary}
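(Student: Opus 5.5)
The plan is to repeat the single-judge argument of Appx.~\ref{app:pref-mh-convergence} verbatim, replacing $\bar\alpha_N$ by the multi-judge marginal acceptance probability
\[
\bar\alpha_N^{(m)}(x,y)=\mathbb E\Big[\min\Big\{1,\,r_0(x,y)\prod_{i=1}^m\tfrac{K_i}{N-K_i+1}\Big\}\Big],\qquad K_i\sim\mathrm{Binomial}(N,p_{J_i}(x\prec y))\ \text{independent}.
\]
The corresponding kernel $P_N^{(m)}$ is defined by Eq.~\eqref{eq:pref-mh-kernel} with this substitution.

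\textbf{Step 1: stationarity.} Theorem~\ref{thm:multi-n-vote-exact} already gives detailed balance with respect to $\pi_m$, so $\pi_m$ is stationary for $P_N^{(m)}$.

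\textbf{Step 2: positive acceptance of every admissible move.} Under the multi-judge BT assumption each $s_i$ is finite, so $p_i:=p_{J_i}(x\prec y)\in(0,1)$. For $y$ with $r_0(x,y)\in(0,\infty)$, restrict to the event $\{K_1=\cdots=K_m=1\}$. This gives
\[
\bar\alpha_N^{(m)}(x,y)\ \ge\ \prod_{i=1}^m Np_i(1-p_i)^{N-1}\,\min\{1,r_0(x,y)N^{-m}\}>0.
\]
This is the analogue of \eqref{eq:positive-marginal-acceptance}. Combined with Assumption~\ref{assump:proposal-regularity}(1),(2) for $\pi_m$, it yields the implication $q^k(A\mid x)>0\Rightarrow (P_N^{(m)})^k(x,A)>0$, and hence $\pi_m$-irreducibility on $S_m=\operatorname{supp}(\pi_m)$.

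\textbf{Step 3: aperiodicity.} On the event $\{K_1=0\}$, which has probability $(1-p_1)^N>0$, the product vanishes and the move is rejected. Therefore $P_N^{(m)}(x,\{x\})\ge\int (1-p_{J_1}(x\prec y))^N q(dy\mid x)>0$, which gives aperiodicity.

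\textbf{Step 4: convergence from every initialization.} Invoke Theorem~1 of \citet{tierney1994markov} to obtain total-variation convergence for $\pi_m$-a.e.\ initial state, i.e.\ on a set $G$ with $\pi_m(G)=1$. To extend to every $x\in S_m$, use Assumption~\ref{assump:proposal-regularity}(3): it gives $q(G\mid x)=1$. Step 2 then shows each step leaves $x$ for $G$ with positive probability, so the hitting time of $G$ is a.s.\ finite. Since proposals stay in $G$ and rejections keep the state, $G$ is absorbing. The strong Markov property then gives convergence from $x$.

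The only genuinely new point, compared with the single-judge case, is Step 2: the product structure must still give a strictly positive lower bound. Independence of the judges' votes makes this immediate, so I expect no real obstacle. The most delicate part is the extension from $\pi_m$-a.e.\ to every initial state, and that is handled exactly as in the proof of Corollary~\ref{cor:pref-mh-conv}.
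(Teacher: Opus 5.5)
Your proposal is correct and follows the paper's proof essentially step for step. It gets stationarity from Theorem~\ref{thm:multi-n-vote-exact}, irreducibility from the positive-probability event $K_1=\cdots=K_m=1$, aperiodicity from a judge returning zero favorable votes, and extends to every initialization with the same full-measure-set argument based on Assumption~\ref{assump:proposal-regularity}(3); your explicit lower bounds on $\bar\alpha_N^{(m)}$ and on the self-transition probability just make quantitative what the paper states in words.
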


\begin{proof}
By the multi-judge exactness result, the corresponding Markov chain is
reversible with respect to $\pi_m$, and hence $\pi_m$ is stationary.

The irreducibility and aperiodicity arguments from
Section~\ref{app:pref-mh-convergence} apply directly. Under the multi-judge BT
model, each judge satisfies
\[
p_{J_i}(x\prec y)
=
\sigma\!\bigl(s_i(y)-s_i(x)\bigr)
\in(0,1),
\qquad i=1,\ldots,m.
\]
Thus every admissible proposal has strictly positive marginal probability of
being accepted; for example, the event $K_1=\cdots=K_m=1$ has positive
probability and yields a strictly positive acceptance factor. Hence the chain
inherits $\pi_m$-irreducibility from the proposal kernel.

Moreover, if $K_i=0$ for any judge $i$, then the product acceptance factor is
zero. Since this event has positive probability, the chain has a positive
self-transition probability and is therefore aperiodic.

Finally, Assumption~\ref{assump:proposal-regularity}(3), with $\pi$ replaced by
$\pi_m$, allows the same full-measure-set argument used in the proof of
Corollary~\ref{cor:pref-mh-conv} to extend convergence to every initialization
in $\operatorname{supp}(\pi_m)$. Therefore the chain converges in total
variation to $\pi_m$ from every such initialization.
\end{proof}

\subsubsection{Multi-judge optimality}

\begin{theorem}[Peskun--Tierney optimality of the multi-judge $N$-vote rule]
\label{thm:multi-peskun-optimality}
Fix a proposal kernel $q$ and a comparison budget $N\geq 1$ per judge. Among all
exact acceptance rules satisfying detailed balance with respect to $\pi_m$, that
use the same proposal kernel and only the outcomes of exactly $N$ judge queries
per judge per proposal, the multi-judge $N$-vote rule Peskun--Tierney dominates
every competitor. Equivalently, for every proposed move $x\to x'$, we have
\[
\mathbb P_{\text{multi-judge $N$-vote}}
(\textup{accept }x'\mid x,x'\textup{ proposed})
\geq
\mathbb P_{\text{competitor}}
(\textup{accept }x'\mid x,x'\textup{ proposed}).
\]
\end{theorem}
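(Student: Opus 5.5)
The plan is to reduce Peskun--Tierney domination to a pointwise comparison of acceptance probabilities for a fixed proposed move, and then to show that exactness forces a coefficient identity that caps every competitor by the $N$-vote rule.

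\textbf{Setting up the competitor class.} Fix a proposal pair $(x,x')$, and set $c=r_0(x,x')$ and $p_i=p_{J_i}(x\prec x')$. A competitor is described by an acceptance function $b_{x,x'}(z)\in[0,1]$ on the outcome array $z\in\{0,1\}^{m\times N}$, with any internal randomness averaged out; for the reverse move it uses $b_{x',x}$. Since the sampler does not know the latent scores, I read \emph{exact} as requiring detailed balance for every BT model compatible with the observations, i.e.\ for all $(p_1,\ldots,p_m)\in(0,1)^m$.

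\textbf{Degenerate cases and reduction to vote counts.} First I dispose of the degenerate cases exactly as in the proof of Theorem~\ref{thm:multi-n-vote-exact}. If $q(x\mid x')=0$ or $p_0(x')=0$ while the forward flow has positive weight, detailed balance forces the competitor's forward acceptance to be $0$, so the inequality is trivial. It therefore suffices to treat $c\in(0,\infty)$. Next I reduce to vote counts. Within each judge the trials are exchangeable, so symmetrizing $b$ over permutations of each judge's $N$ outcomes leaves both the forward and reverse marginals unchanged. Hence I may assume that $b$ depends only on $k=(k_1,\ldots,k_m)$. Then
\[
\bar\beta(x,x')=\sum_k b(k)\prod_i\tbinom{N}{k_i}p_i^{k_i}(1-p_i)^{N-k_i},
\qquad
\bar\beta(x',x)=\sum_j b'(j)\prod_i\tbinom{N}{j_i}(1-p_i)^{j_i}p_i^{N-j_i},
\]
where $b'$ denotes the competitor's reverse acceptance function.

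\textbf{Polynomial identity and coefficient matching.} By \eqref{eq:multi-ratio-main}, detailed balance reads $\bar\beta(x,x')=c\prod_i\frac{p_i}{1-p_i}\,\bar\beta(x',x)$. Multiplying by $\prod_i(1-p_i)$ turns this into an identity between multivariate polynomials that holds on the open set $(0,1)^m$, hence identically. Both sides are then expanded in the tensor-product Bernstein basis $\prod_i p_i^{a_i}(1-p_i)^{N+1-a_i}$ of degree $N+1$ in each variable. The left side contributes exponents $a_i=k_i\in\{0,\ldots,N\}$, and the right side contributes $a_i=N-j_i+1\in\{1,\ldots,N+1\}$. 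By linear independence of this basis, the coefficients must match term by term, which gives two conclusions:
\begin{itemize}
\item $b(k)=0$ whenever some $k_i=0$, because the right side has no term with $a_i=0$;
\item for $k_i\ge1$, with $j=N-k+1$, we get $b(k)\prod_i\binom{N}{k_i}=c\,b'(j)\prod_i\binom{N}{N-k_i+1}$.
\end{itemize}
Applying the identity $\binom{N}{k-1}=\binom{N}{k}\frac{k}{N-k+1}$ from the exactness proof, the second conclusion becomes
\[
b(k)=\Bigl(c\prod_i\tfrac{k_i}{N-k_i+1}\Bigr)\,b'(N-k+1).
\]

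\textbf{Conclusion and main obstacle.} Since $b(k)\le1$ and $b'(N-k+1)\le1$, the last display yields $b(k)\le\min\{1,c\prod_i\frac{k_i}{N-k_i+1}\}=\alpha_N^{(m)}(x,x';k)$ for every $k$, including $k$ with some $k_i=0$, where both sides vanish. Averaging over $K_i\sim\mathrm{Binomial}(N,p_i)$ gives $\bar\beta(x,x')\le\bar\alpha^{(m)}_N(x,x')$. Integrating this inequality against $q(\cdot\mid x)$ over sets excluding $x$ yields Peskun--Tierney domination, as in the proof of Theorem~\ref{thm:peskun-optimality}.

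The step I expect to be the main obstacle is conceptual rather than computational. The argument must justify that detailed balance is required uniformly over all $p\in(0,1)^m$, and that $b$ and $b'$ may depend on $(x,x')$ but not on the unknown $p_i$. This is exactly what converts a single equation into a polynomial identity, and hence into coefficient matching. I would make this explicit as the definition of the competitor class, mirroring the modelling used for Theorem~\ref{thm:no-fixed-budget-mh}.
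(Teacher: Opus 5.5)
Your proposal is correct and follows essentially the same route as the paper: reduce to count-level acceptance functions, then use exactness for all $(p_1,\ldots,p_m)\in(0,1)^m$ to get a polynomial identity. Matching coefficients then gives $b_k(x,x')=0$ when some $k_i=0$ and $b_k(x,x')=c\prod_i\frac{k_i}{N-k_i+1}\,b_{N-k+1}(x',x)$ otherwise, after which you bound by $\min\{1,\cdot\}$ and integrate against $q$. Your Bernstein-basis matching is the paper's monomial matching in $t_i=p_i/(1-p_i)$ written in different coordinates. Your symmetrization and degenerate-case steps make explicit what the paper leaves implicit when it defines $b_k$ directly as the acceptance probability given the vote counts.
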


\begin{proof}
Fix a proposed move $x\to x'$, and write
\[
p_i := p_{J_i}(x\prec x'),
\qquad
c := r_0(x,x').
\]
Thus the reverse move $x'\to x$ has comparison probability $1-p_i$ for judge
$i$, and baseline factor $1/c$.

We first describe the form of any competing rule that uses exactly $N$ binary
queries from each of the $m$ judges. Such a rule may depend on the full
collection of comparison outcomes and may also use additional internal
randomness. After averaging over any such internal randomness, let
$b_k(x,x')\in[0,1]$, for $k=(k_1,\ldots,k_m)$, denote the probability that the
competitor accepts the proposal $x'$ when exactly $k_i$ of the $N$ queries to
judge $J_i$ favor $x'$, for each $i=1,\ldots,m$.

Since the number of votes from judge $J_i$ favoring $x'$ is
$K_i\sim\mathrm{Binomial}(N,p_i)$, independently across judges, the
competitor's marginal acceptance probability can be written as
\[
\bar\beta_N^{(m)}(x,x')
=
\sum_{k_1=0}^N\cdots\sum_{k_m=0}^N
b_k(x,x')
\prod_{i=1}^m
\binom{N}{k_i}p_i^{k_i}(1-p_i)^{N-k_i}.
\]
Similarly, for the reverse move,
\[
\bar\beta_N^{(m)}(x',x)
=
\sum_{j_1=0}^N\cdots\sum_{j_m=0}^N
b_j(x',x)
\prod_{i=1}^m
\binom{N}{j_i}(1-p_i)^{j_i}p_i^{N-j_i}.
\]

Because the competitor is assumed to satisfy detailed balance with respect to
\(\pi_m\), its marginal acceptance probabilities must obey
\[
\pi_m(x)\,q(x'\mid x)\,\bar\beta_N^{(m)}(x,x')
=
\pi_m(x')\,q(x\mid x')\,\bar\beta_N^{(m)}(x',x).
\]
Equivalently,
\[
\bar\beta_N^{(m)}(x,x')
=
\frac{\pi_m(x')q(x\mid x')}{\pi_m(x)q(x'\mid x)}
\bar\beta_N^{(m)}(x',x).
\]
By the multi-judge Bradley--Terry odds identity and the definition of \(r_0\),
\[
\frac{\pi_m(x')q(x\mid x')}{\pi_m(x)q(x'\mid x)}
=
r_0(x,x')\prod_{i=1}^m
\frac{p_{J_i}(x\prec x')}{1-p_{J_i}(x\prec x')}.
\]
Writing \(c=r_0(x,x')\) and \(p_i=p_{J_i}(x\prec x')\), we obtain
\[
\bar\beta_N^{(m)}(x,x')
=
\left(
c\prod_{i=1}^m\frac{p_i}{1-p_i}
\right)\,
\bar\beta_N^{(m)}(x',x).
\]

Substituting the two expressions for the marginal acceptance probabilities gives
\[
\begin{aligned}
&\sum_{k_1=0}^N\cdots\sum_{k_m=0}^N
b_k(x,x')
\prod_{i=1}^m
\binom{N}{k_i}p_i^{k_i}(1-p_i)^{N-k_i}
\\
&=
\left(
c\prod_{i=1}^m\frac{p_i}{1-p_i}\right)
\sum_{j_1=0}^N\cdots\sum_{j_m=0}^N
b_j(x',x)
\prod_{i=1}^m
\binom{N}{j_i}(1-p_i)^{j_i}p_i^{N-j_i}.
\end{aligned}
\]
Since the rule must be exact for the unknown comparison probabilities
\(p_i\in(0,1)\), this identity holds as an identity in
\((p_1,\ldots,p_m)\).

To compare the two sides cleanly, set
\[
t_i := \frac{p_i}{1-p_i},
\qquad i=1,\ldots,m.
\]
Since each \(p_i\in(0,1)\), we may divide both sides by
\(\prod_{i=1}^m(1-p_i)^N\). On the left, each term becomes
\[
\frac{p_i^{k_i}(1-p_i)^{N-k_i}}{(1-p_i)^N}
=
\left(\frac{p_i}{1-p_i}\right)^{k_i}
=
t_i^{k_i}.
\]
On the right, after incorporating the factor
\(\prod_i p_i/(1-p_i)\), each coordinate contributes
\[
\frac{p_i^{N-j_i+1}(1-p_i)^{j_i-1}}{(1-p_i)^N}
=
\left(\frac{p_i}{1-p_i}\right)^{N-j_i+1}
=
t_i^{N-j_i+1}.
\]
Thus we obtain the polynomial identity
\[
\sum_{k_1=0}^N\cdots\sum_{k_m=0}^N
b_k(x,x')
\prod_{i=1}^m
\binom{N}{k_i}t_i^{k_i}
=
c
\sum_{j_1=0}^N\cdots\sum_{j_m=0}^N
b_j(x',x)
\prod_{i=1}^m
\binom{N}{j_i}t_i^{N-j_i+1}.
\]

We first note that this identity forces zero acceptance whenever at least one
judge gives zero votes in favor of the proposed move. Indeed, suppose that
some coordinate \(j_i=0\) on the right-hand side. Then the corresponding term
contains the power \(t_i^{N+1}\). No term on the left-hand side contains a
power of \(t_i\) larger than \(N\). Since the identity holds as a polynomial
identity, the coefficient of every such monomial must be zero. Hence
\[
b_j(x',x)=0
\qquad
\text{whenever some } j_i=0.
\]
Similarly, no term on the right-hand side contains a power \(t_i^0\), so
\[
b_k(x,x')=0
\qquad
\text{whenever some } k_i=0.
\]

We may therefore restrict attention to indices with all coordinates at least
one. The polynomial identity becomes
\[
\sum_{k_1=1}^N\cdots\sum_{k_m=1}^N
b_k(x,x')
\prod_{i=1}^m
\binom{N}{k_i}t_i^{k_i}
=
c
\sum_{j_1=1}^N\cdots\sum_{j_m=1}^N
b_j(x',x)
\prod_{i=1}^m
\binom{N}{j_i}t_i^{N-j_i+1}.
\]

Now reindex the right-hand side coordinatewise by setting
\[
k_i=N-j_i+1,
\qquad i=1,\ldots,m.
\]
Equivalently,
\[
j_i=N-k_i+1.
\]
As \(j_i\) ranges from \(1\) to \(N\), so does \(k_i\). Therefore,
\[
\begin{aligned}
&c
\sum_{j_1=1}^N\cdots\sum_{j_m=1}^N
b_j(x',x)
\prod_{i=1}^m
\binom{N}{j_i}t_i^{N-j_i+1}
\\
&=
c
\sum_{k_1=1}^N\cdots\sum_{k_m=1}^N
b_{N-k+1}(x',x)
\prod_{i=1}^m
\binom{N}{N-k_i+1}t_i^{k_i},
\end{aligned}
\]
where \(N-k+1\) denotes the vector
\[
(N-k_1+1,\ldots,N-k_m+1).
\]
Thus
\[
\sum_{k_1=1}^N\cdots\sum_{k_m=1}^N
b_k(x,x')
\prod_{i=1}^m
\binom{N}{k_i}t_i^{k_i}
=
c
\sum_{k_1=1}^N\cdots\sum_{k_m=1}^N
b_{N-k+1}(x',x)
\prod_{i=1}^m
\binom{N}{N-k_i+1}t_i^{k_i}.
\]

This equality holds for all \(t_i>0\), \(i=1,\ldots,m\). Since both sides are
polynomials in \((t_1,\ldots,t_m)\), their coefficients must agree. Hence, for
every \(k=(k_1,\ldots,k_m)\) with \(k_i\ge 1\) for all \(i\),
\[
b_k(x,x')
\prod_{i=1}^m \binom{N}{k_i}
=
c\,b_{N-k+1}(x',x)
\prod_{i=1}^m \binom{N}{N-k_i+1}.
\]
Because \(b_{N-k+1}(x',x)\le 1\), we get
\[
b_k(x,x')
\le
c\prod_{i=1}^m
\frac{\binom{N}{N-k_i+1}}{\binom{N}{k_i}}.
\]
Using
\[
\binom{N}{N-k_i+1}
=
\binom{N}{k_i-1},
\]
this becomes
\[
b_k(x,x')
\le
c\prod_{i=1}^m
\frac{\binom{N}{k_i-1}}{\binom{N}{k_i}}
=
c\prod_{i=1}^m
\frac{k_i}{N-k_i+1}.
\]
Since \(b_k(x,x')\) is an acceptance probability, we also have
\(b_k(x,x')\le 1\). Therefore, for every vote-count vector
\(k=(k_1,\ldots,k_m)\),
\[
b_k(x,x')
\le
\min\!\left\{1,\;
c\prod_{i=1}^m\frac{k_i}{N-k_i+1}
\right\}.
\]
For vectors with some \(k_i=0\), this also holds because we already showed
\(b_k(x,x')=0\).

But the right-hand side is exactly the realized acceptance probability of the
multi-judge \(N\)-vote rule after observing the vote-count vector \(k\):
\[
\alpha_N^{(m)}(x,x';k)
=
\min\!\left\{1,\;
r_0(x,x')\prod_{i=1}^m\frac{k_i}{N-k_i+1}
\right\}
=
\min\!\left\{1,\;
c\prod_{i=1}^m\frac{k_i}{N-k_i+1}
\right\}.
\]
Thus,
\[
b_k(x,x')\le \alpha_N^{(m)}(x,x';k)
\qquad
\text{for every }k=(k_1,\ldots,k_m).
\]

Averaging over the independent vote counts
\(K_i\sim\mathrm{Binomial}(N,p_i)\), and using that all binomial probabilities
are nonnegative, gives
\[
\begin{aligned}
\bar\beta_N^{(m)}(x,x')
&=
\sum_{k_1=0}^N\cdots\sum_{k_m=0}^N
b_k(x,x')
\prod_{i=1}^m
\binom{N}{k_i}p_i^{k_i}(1-p_i)^{N-k_i}
\\
&\le
\sum_{k_1=0}^N\cdots\sum_{k_m=0}^N
\alpha_N^{(m)}(x,x';k)
\prod_{i=1}^m
\binom{N}{k_i}p_i^{k_i}(1-p_i)^{N-k_i}.
\end{aligned}
\]
The right-hand side is precisely the marginal acceptance probability of the
multi-judge \(N\)-vote rule:
\[
\bar\alpha_N^{(m)}(x,x')
=
\mathbb P_{\text{multi-judge }N\text{-vote}}
(\text{accept }x'\mid x,x'\text{ proposed}).
\]
Therefore,
\[
\mathbb P_{\text{comp}}(\text{accept }x'\mid x,x'\text{ proposed})
=
\bar\beta_N^{(m)}(x,x')
\le
\bar\alpha_N^{(m)}(x,x')
\]
\[
= \mathbb P_{\text{multi-judge }N\text{-vote}}
(\text{accept }x'\mid x,x'\text{ proposed}).
\]

It remains to translate this acceptance-probability dominance into the
Peskun--Tierney comparison of the resulting Markov chains. Let \(P_{N,m}^\star\)
denote the one-step transition rule of the multi-judge \(N\)-vote sampler,
meaning the distribution of the next state after one proposal and accept/reject
step, and let \(P_{\rm comp}\) denote the one-step transition rule of the
competing sampler. Thus, \(P_{N,m}^\star(x,A)\) is the probability that the
multi-judge \(N\)-vote chain moves from the current state \(x\) into a set of
states \(A\) in one step, and \(P_{\rm comp}(x,A)\) is the corresponding
probability for the competitor.

Peskun--Tierney domination compares the probabilities of moving away from the
current state. Therefore, consider any set
\(A\subseteq \mathcal X\setminus\{x\}\), so that \(A\) contains only states
different from the current state. Since both samplers use the same proposal
kernel \(q\), the only difference between them is the probability of accepting
a proposed state. Hence
\[
P_{N,m}^\star(x,A)
=
\int_A \bar\alpha_N^{(m)}(x,y)\,q(dy\mid x),
\]
where \(\bar\alpha_N^{(m)}(x,y)\) is the marginal probability that the
multi-judge \(N\)-vote rule accepts \(y\) after it is proposed from \(x\).
Similarly,
\[
P_{\rm comp}(x,A)
=
\int_A \bar\beta_N^{(m)}(x,y)\,q(dy\mid x),
\]
where \(\bar\beta_N^{(m)}(x,y)\) is the marginal acceptance probability of the
competing rule.

From the pointwise argument above, we have shown that for every proposed move
\(x\to y\),
\[
\bar\beta_N^{(m)}(x,y)\le \bar\alpha_N^{(m)}(x,y).
\]
Because \(q(dy\mid x)\) is the same nonnegative proposal measure for both
samplers, integrating this inequality over \(A\) gives
\[
P_{\rm comp}(x,A)
\le
P_{N,m}^\star(x,A).
\]
Thus, from every current state \(x\), the multi-judge \(N\)-vote sampler
assigns at least as much probability as the competitor to moving into any set
of different states. This is exactly the Peskun--Tierney domination condition.
Therefore, the multi-judge \(N\)-vote rule Peskun--Tierney dominates every
exact fixed-budget competitor.
\end{proof}

\section{Conditional Target Derivation}
\label{app:bayes_mh_derivation}

We derive the product-form target used in the multi-condition setting.
Let $\mathcal{X}$ denote the space of candidate objects, and let
$p_0$ be the base distribution over $\mathcal{X}$, with density or
mass function also denoted by $p_0$. Let
$M_1,\ldots,M_m$ denote the desired conditions, and let
$J_1,\ldots,J_m$ be the associated judges. For each condition $M_i$,
we write $p_{J_i}(M_i\mid x)$ for the judge-induced probability, or
likelihood factor, that $x$ satisfies $M_i$ according to judge $J_i$.

We consider the joint model in which $x\sim p_0$, and the judge-induced
events $M_1,\ldots,M_m$ are conditionally independent given $x$:
\begin{equation}
p_J(M_1,\ldots,M_m\mid x)
=
\prod_{i=1}^m p_{J_i}(M_i\mid x).
\label{eq:multi_cond_indep}
\end{equation}
Equivalently,
\begin{equation}
p_J(M_i\mid x, M_{-i}) = p_{J_i}(M_i\mid x),
\qquad i=1,\ldots,m,
\end{equation}
where $M_{-i}$ denotes all conditions except $M_i$.

By Bayes' rule, the conditional distribution induced by the base model
and the judges is
\begin{align}
p(x\mid M_1,\ldots,M_m)
&=
\frac{
p_0(x)\,p_J(M_1,\ldots,M_m\mid x)
}{
\int_{\mathcal{X}} p_0(u)\,p_J(M_1,\ldots,M_m\mid u)\,d\mu(u)
} \\
&=
\frac{
p_0(x)\prod_{i=1}^m p_{J_i}(M_i\mid x)
}{
\int_{\mathcal{X}} p_0(u)\prod_{i=1}^m p_{J_i}(M_i\mid u)\,d\mu(u)
}.
\end{align}
Thus, defining
\begin{equation}
\tilde{\pi}_m(x)
:=
p_0(x)\prod_{i=1}^m p_{J_i}(M_i\mid x),
\qquad
Z_m
:=
\int_{\mathcal{X}} \tilde{\pi}_m(u)\,d\mu(u),
\end{equation}
we obtain the normalized target
\begin{equation}
\pi_m(x)
:=
\frac{\tilde{\pi}_m(x)}{Z_m}
\propto
p_0(x)\prod_{i=1}^m p_{J_i}(M_i\mid x).
\label{eq:multi_target_appendix}
\end{equation}
When the judge likelihoods satisfy
$p_{J_i}(M_i\mid x)\propto \exp(s_i(x))$, this becomes
\begin{equation}
\pi_m(x)
\propto
p_0(x)\exp\left(\sum_{i=1}^m s_i(x)\right),
\end{equation}
which is the multi-condition target used in the main text.

\end{document}